\documentclass{article}

\PassOptionsToPackage{square, numbers}{natbib}

\usepackage[final]{neurips_2021}




\usepackage[utf8]{inputenc} 
\usepackage[T1]{fontenc}    
\usepackage{hyperref}       
\usepackage{url}            
\usepackage{booktabs}       
\usepackage{amsfonts}       
\usepackage{nicefrac}       
\usepackage{microtype}      
\usepackage{xcolor}         

\usepackage{amsmath}
\usepackage{algorithm}
\usepackage{algorithmic}
\usepackage{multirow}          
\usepackage{caption}          
\usepackage{subcaption}          
\usepackage{float}  
\usepackage{amsthm}

\newcommand{\set}[1]{\mathcal{#1}}

\renewcommand{\vec}[1]{\boldsymbol{#1}}
 
 \newcommand{\tblue}[1]{{#1}}

\DeclareMathOperator*{\argmin}{arg\,min}

\usepackage{graphicx}
\graphicspath{ {./figures/} }

\bibliographystyle{abbrvnat}

\newcommand{\sota}{state-of-the-art~}

\newtheorem{property}{Property}
\newtheorem{lemma}{Lemma}

\renewcommand{\algorithmiccomment}[1]{\bgroup\hfill//~#1\egroup}

\makeatletter
\def\blfootnote{\gdef\@thefnmark{}\@footnotetext}
\makeatother



 \title{Gradient Inversion with Generative Image Prior}

%

\author{%
  \textbf{Jinwoo Jeon$^{1*}$, Jaechang Kim$^{2*}$, Kangwook Lee$^{3}$, Sewoong Oh$^{4}$, Jungseul Ok$^{1,2}$}
  \\
  $^{1}$ Department of Computer Science \& Engineering, Pohang University of Science and Technology\\
  $^{2}$ Graduate School of Artificial Intelligence, Pohang University of Science and Technology\\
  $^{3}$ Department of Electrical and Computer Engineering, University of Wisconsin-Madison, Madison\\
  $^{4}$ Paul G. Allen School of Computer Science \& Engineering, University of Washington \\
}


\begin{document}

\maketitle

\blfootnote{$^*$ equal contribution}

\begin{abstract}

Federated Learning (FL) is a distributed learning framework, in which 
the local data never leaves clients' devices to preserve privacy, and the server trains models on the data via 
accessing only the gradients of those local data. 
Without further privacy mechanisms such as differential privacy, this leaves the system vulnerable against an attacker who inverts those gradients to reveal clients' sensitive data.
However, a gradient is often insufficient to reconstruct the user data without any prior knowledge.
By exploiting a generative model
pretrained on the data distribution, we demonstrate that data privacy can be easily breached.
Further, when such prior knowledge 
is unavailable, 
we investigate the possibility of learning the prior from a sequence of gradients seen in the process of FL training.
We experimentally show that
the prior in a form of generative model
is learnable from iterative interactions in FL.
Our findings strongly suggest that additional mechanisms are necessary
to prevent privacy leakage in FL.

\end{abstract}

\section{Introduction}
\label{submission}

Federated learning (FL) is an emerging framework for distributed learning, where
central server aggregates model updates, rather than user data, from end users~\cite{brisimi2018federated,fedavg}.
The main premise of federated learning is that this particular way of distributed learning can protect users' data privacy as there is no explicit data shared by the end users with the central server. 

\tblue{
However, a recent line of  work~\cite{zhu,zhao,geiping,nvidia} demonstrates that one may recover the private user data used for training by observing the gradients.
This process of recovering the training data from gradients, so-called \emph{gradient inversion}, 
}
poses a huge threat to the federated learning community, as it may imply the fundamental flaw of its main premise.

Even more worryingly, recent works suggest that such gradient inversion attacks can be made even stronger if  certain side-information is available.
For instance, \citet{geiping} show that if the attacker knows a prior that user data consists of natural images, then the gradient inversion attack can leverage such prior, achieving a more accurate recovery of the user data.
Another instance is when batch norm statistics are available at the attacker in addition to gradients. 
This can actually happen if the end users share their local batch norm statistics as in~\cite{fedavg}. 
\citet{nvidia} show that such batch normalization statistics can significantly improve the strength of the gradient inversion attack, enabling precise recovery of high-resolution images. 

In this paper, we systematically study how one can maximally utilize and even obtain the prior information when inverting gradients.
We first consider the case that the attacker 
has a generative model pretrained on the exact or approximate distribution of the user data as a prior.
For this, we propose an efficient gradient inversion algorithm
that utilizes the generative model prior.
In particular, the algorithm consists of two steps, in which
the first step searches the latent space (of lower dimension) defined by the generative model instead of the ambient input space (of higher dimension), and then the second step adapts the generative model to each input given the gradient. Each step provides substantial improvement in the reconstruction.
We name the algorithm as gradient inversion in alternative spaces (GIAS).
Figure~\ref{fig:authors} represents reconstruction results with the proposed method and existing one.

We then consider a realistic scenario in which 
the user data distribution is not known in advance, and thus the attacker needs to learn it from gradients.
For this scenario, we develop a meta-learning framework, 
called gradient inversion to meta-learn (GIML),
which learns a generative model on user data
from observing and inverting 
multiple gradients computed on the data, e.g. across different FL epochs or participating nodes.
Our experimental results demonstrate that one can learn a generative model via GIML and reconstruct data by making use of the learned generative model.

This implies a great threat on privacy leakage
in FL since our methods can be applied
for any data type in most FL scenarios unless
a specialized architecture prevents the gradient leakage explicitly, e.g., \cite{adfss}.

Our main contributions are as follows: 
\begin{itemize}
\item We introduce GIAS that fully utilizes a pretrained generative model
to invert gradient. In addition, we propose GIML which can train generative model from gradients only in FL.

\item We demonstrate significant privacy leakage occurring 
by GIAS with a pretrained generative model in various FL scenarios
which are challenging to other existing methods, e.g., \cite{geiping, nvidia}.


\item  We experimentally show that GIML can learn a generative model
on the user data from only gradients,
which provides the same level of data recovery with a given pretrained model.
To our best knowledge, GIML is the first capable of learning explicit 
prior on a set of gradient inversion tasks.

\item 
We note that a gradient inversion technique defines
a standard on defence mechanism 
in FL for privacy \cite{wei}.
By substantiating that our proposed methods
are able to break down defense mechanisms 
that were safe according to the previous standard,
we give a strong warning to the FL community 
to use a higher standard defined by our attack methods, 
and raise the necessity of a more conservative choice of defense mechanisms.

\end{itemize}


\begin{figure}[t]
\centering
\includegraphics[width=0.6\textwidth]{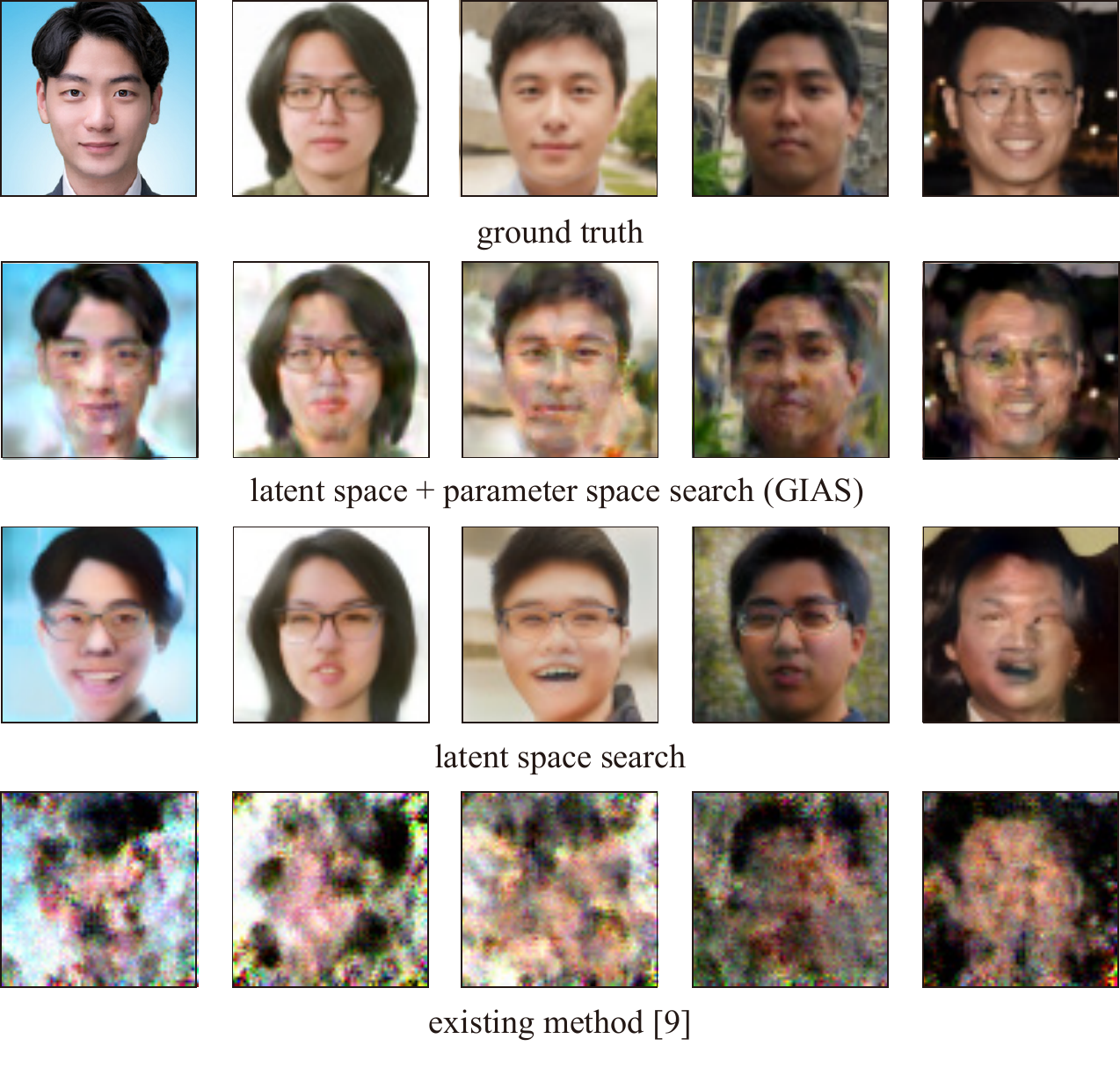}
\vspace{-0.3cm}
    \caption{
        \tblue{
        {\em An example showing the superiority of GIAS compared to existing method.}
        Images of the authors are reconstructed from gradients by exploiting a generative model pretrained on human face images.
        }
    }
    \label{fig:authors}
    \vspace{-0.5cm}
\end{figure}

\section{Related work}

    \paragraph{Privacy attacks in FL.}
    Early works \cite{Melis, shokri2017membership} 
    investigate
    membership inference from gradients
    to  check the possibility of privacy leakage in FL. 
    \citet{phong} demonstrate
    that it is possible to reconstruct detailed input image 
    when FL trains a shallow network such as single-layer perceptron.
        \citet{fan} and \citet{rgap} 
    consider a wider class of learning model
    and propose an analytical approach
       solving a sequence of linear systems 
       to reveal the output of each layer recursively.
    To study the limit of the gradient inversion
    in practical scenarios of 
    training deep networks via FL,
    a sequence of effort has been made
    formulating optimization problem
    to minimize discrepancy comparing
    gradients from true data and reconstructed data
        \cite{geiping, wang, nvidia, zhao, zhu}.
    
    \paragraph{Gradient inversion with prior.}
    The optimization-based approaches
    are particularly useful 
    as one can easily utilize prior knowledge by adding regularization terms, e.g.,
    total variation 
    \cite{wang, geiping}
    and BN statistics \cite{nvidia}, or changing discrepancy measure \cite{geiping} .
    In \cite{nvidia}, 
    a privacy attack technique using 
    a generative model is introduced.  
    They however require a pretrained model,
    while we propose a meta learning framework
    training generative model from gradients only. 
    In addition, our method of inverting gradient maximally exploit a given generative model by alternating search spaces, 
    which are analogous to the state-of-the-art GAN inversion techniques \cite{ generativePrior, Bau_2019_ICCV, zhu2020indomain}.
    
    \tblue{
        \paragraph{Generative model revealing private data.}
        Training a generative model with transmitted gradients 
        also demonstrates privacy leakage in FL.
        \citet{hitaj2017deep} introduce an algorithm to train a GAN
        regarding shared model in FL framework as a discriminator.
        \citet{wang} use reconstructed data from gradient to train a GAN. 
        Those works require some auxiliary dataset given in advance to enable the training of GAN, 
        while we train a generative model using transmitted gradients only.
        Also, we not only train a generative model but also utilize it for reconstruction,
        while the generative models in \cite{hitaj2017deep, wang} are not used for the reconstruction.
        Hence, in our approach, the generative model and reconstruction can be improved interactively to each other as shown in Figure~\ref{fig:trained_sampled}.
        In addition, \cite{wang} is less sample-efficient than ours in a sense that they use gradients to reconstruct images and then train a generative model with the reconstructed images, i.e., if the reconstruction fails, then the corresponding update of the generative model fails too, whereas we train the generative model directly from gradients.
        
    }

\section{Problem formulation}
\label{sec:model}

In this section, we formally describe the gradient inversion (GI) problem.
    Consider a standard supervised learning for classification, 
    which optimizes neural network model $f_\theta$ parameterized by $\theta$ as follows:
    \begin{equation}
        \min_\theta ~ \sum_{(x,y) \in \set{D}} 
         \ell ( f_\theta( x ), y) \;,
    \end{equation}
    where $\ell$ is a point-wise loss function and 
    $\set{D}$ is a dataset of input $x \in \mathbb{R}^{m}$ and label $y \in \{0,1\}^L$ (one-hot vector).
    In federated learning framework, each node reports
    the gradient of $\ell( f_\theta( x ), y)$ for sampled data $(x,y)$'s instead of directly transferring the data.
    The problem of inverting gradient is to reconstruct
    the sampled data used to compute the reported gradient.
    Specifically, when a node computes the gradient $g$ using a batch $\{(x^*_1,y^*_1), ..., (x^*_B,y^*_B) \}$, i.e.,
    $g = \frac{1}{B}
        \sum_{j =1}^B \nabla\ell ( f_\theta( x^*_j ), y^*_j)$,
    we consider the following problem of inverting gradient: 
      \vspace{0.1cm}
      \begin{equation} \label{eq:GIP}
        \min_{\substack{(x_1, y_1), \cdots,  (x_B, y_B) \\ \in \mathbb{R}^m \times \{0,1\}^L}}  
        d\left(
        \frac{1}{B}
        \sum_{j =1}^B \nabla\ell ( f_\theta( x_j ), y_j),
        g \right)
        \;,
    \end{equation}
    \vspace{0.1cm}
    where $d(\cdot, \cdot)$ is a measure of the discrepancy between two gradient, e.g., $\ell_2$-distance \cite{zhu, nvidia} or negative cosine similarity \cite{geiping}.
    It is known that label $y$ can be almost accurately recovered by simple methods just observing the gradient at the last layer \cite{zhao, nvidia},
    while reconstructing input $x$ remains still challenging as it is often under-determined even when the true label is given.
    For simplicity, we hence focus on the following minimization to reveal the inputs from the gradient given the true labels:
    \vspace{-0.1cm}
    \begin{equation} \label{eq:GIP-sim}
        \min_{x_1, ..., x_B \in \mathbb{R}^m }  
     c\left(x_1, ..., x_B; \theta, g 
     \right)
        \;,
    \end{equation}
    where 
        we denote
    by $c\left(x_1, ..., x_B; \theta, g
    \right)$ the cost function in \eqref{eq:GIP} given $y_j = y^*_j$ for each $j = 1, ..., B$.
    


     
        
\section{Methods}
\label{sec:analysis}
\label{sec:method}

    The key challenge of inverting gradient
    is that solving \eqref{eq:GIP} is often under-determined, i.e., a gradient contains only insufficient information to recover data.
    Such an issue is observed even when 
    the dimension of gradient is much larger than that of input data.
    Indeed, 
    \citet{rgap} show that there exist a pair of different data having 
    the same gradient, so called twin data, even when the learning model is large.
    To alleviate this issue, 
    a set of prior knowledge on the nature of data 
    can be considered. 

    When inverting images, \citet{geiping} propose to add 
    the total variation regularization $R_{\text{TV}}(x)$ 
    to the cost function in \eqref{eq:GIP-sim}
    since neighboring pixels of natural images are likely to have similar values. 
    More formally, 
    \begin{align}
    R_{\text{TV}}(x):= \sum_{(i, j)} \sum_{(i', j') \in \partial (i, j)} \| x(i,j) - x(i',j')\|^2 \;,
    \end{align}
    where $\partial(i,j)$ is the set of neighbors of $(i,j)$.
    This method is limited to the natural image data.

    For general type of data, one can consider exploiting
    the batch normalization (BN) statistics from nodes.
    This is available in the case that 
    the server wants to utilize batch normalization (BN) in FL,
    and thus collects the BN statistics (mean and variance) of batch
    from each node, in addition, with every gradient report \cite{fedavg}. 
    To be specific, 
        \citet{nvidia}
        propose to 
        employ the regularizer $R_{\text{BN}}(x_{1}, ..., x_B; \theta)$
        which quantifies the discrepancy between
        the BN statistics of estimated $x_j$'s and those of true $x^*_j$'s on each layer of the learning model. More formally,
        \begin{align*}
        \!R_{\text{BN}}(x_{1}, ..., x_B; \theta)
        \!:=\! \sum_{l} \|\mu_l 
        \!-\! \mu_{l, \text{exact}}
        \|_2
        +\|\sigma^2_l 
        \!-\! \sigma^2_{l, \text{exact}}
        \|_2 ,
        \end{align*}
        where 
        $\mu_l(x_{1}, ..., x_B; \theta)$
        and $\sigma^2_l (x_{1}, ..., x_B; \theta)$
        (resp. $\mu_{l, \text{exact}}(x^*_{1}, ..., x^*_B; \theta)$
        and $\sigma^2_{l, \text{exact}}(x^*_{1}, ..., x^*_B; \theta)$)
        are the mean and variance of 
        $l$-th layer feature maps 
        for the estimated batch $x_{1}, ..., x_B$ 
        (resp. the true batch $x^*_{1}, ..., x^*_B$ ) given $\theta$.
        This is available only if clients agree 
        to report their exact BN statistics at every round.
        But not every FL framework report BN statistics \cite{li2021fedbn, siloBN}.
        In that case, \citet{nvidia} also propose to use 
        the BN statistics over the entire data distribution
        as a proxy of the true BN statistics,
        and reports that
        the gain from the approximated BN statistics
        is comparable to that from the exact ones.
        The applicability of $R_{\text{BN}}$
        with the approximated BN statistics
        is still limited as
        the proxy needs to be additionally recomputed over the entire data distribution
        at every change of $\theta$. However, 
        this demonstrates the significant impact 
        of knowing 
        the data distribution in the gradient inversion
        and motivates our methods using and learning a generative model on the user data, described in what follows.

     \subsection{Gradient inversion with trained generative model}
       \label{sec:GIAS}
       
\begin{figure}[ht]
\centering
\includegraphics[width=0.8\textwidth, trim={0cm, 0cm, 0cm, 0cm}]{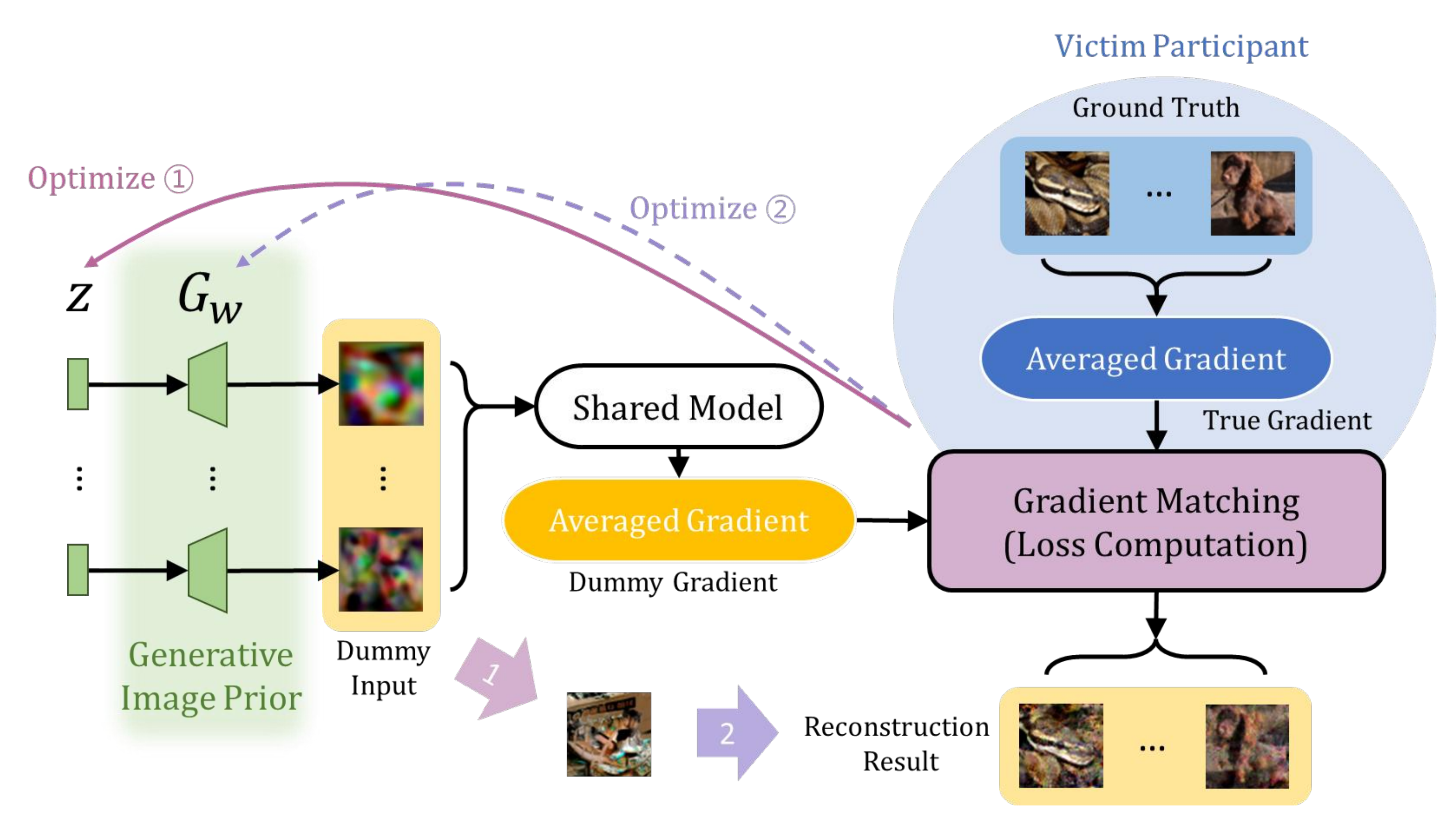}
\vspace{0.5cm}
\caption{{\em An overview of GIAS}. GIAS optimizes a latent code $z$ and generative model parameters $w$ to reconstruct the data which matches the gradient.}\label{fig:architecture}
\end{figure}


Consider a decent generative model
    $G_w: \mathbb{R}^k  \mapsto \mathbb{R}^m$
    trained
    on the approximate (possibly exact) distribution of user data $\set{D}$
    such that
    $x^* \approx G_w(z^*)$
    for $(x^*, \cdot) \in \set{D}$
    and its latent code $z^* = \argmin_z \|G_w (z) - x^*\|$.
    To fully utilize such a pretrained generative model, we propose
     gradient inversion in alternative spaces
     (GIAS), of which pseudocode
     is presented in Appendix~\ref{sec:algorithm},
     which performs latent space search over $z$ and then 
     parameter space search over $w$. 
     We also illustrate the overall procedure of GIAS in Figure~\ref{fig:architecture}.

    \paragraph{Latent space search.}  
    Note that the latent space is typically much smaller than 
    the ambient input space, i.e., $k \ll m$, for instances, DCGAN \cite{dcgan} of $k = 100$ and StyleGAN \cite{Karras_2019_CVPR} of $k = 512 \times 16$  for image data of $m =
    (\text{width}) \times (\text{height}) \times (\text{color})$ such as 
    $32 \times 32 \times 3$, $256 \times 256 \times 3$, or larger.
    Using such a pretrained generative model with $k \ll m$,
    the under-determined issues
    of \eqref{eq:GIP-sim}
    can be directly mitigated 
    by narrowing down the searching space from 
    $\mathbb{R}^m$ to 
    $\{G_w(z) : z \in \mathbb{R}^k\}$.
    Hence, GIAS first performs the latent space search in the followings:
    \begin{equation} \label{eq:G-GIP-sim}
        \min_{
        z_1, ..., z_B \in \mathbb{R}^k} ~  
        c\left(G_w(z_1), ..., G_w(z_B) \right) \;.
    \end{equation}
    
    
    
    Considering a canonical class of neural network model $f_\theta$,
    we can show that 
    the reconstruction of $x^*$ by latent space search 
    in \eqref{eq:G-GIP-sim}
    aligns with that by input space search in \eqref{eq:GIP-sim}
    if the generative model $G_w$ approximates input data 
    with small enough error.
      \begin{property} \label{prop:conti}
      For an input data $x^* \in [0,1 ]^m$ 
      consider
      the gradient inversion problem of minimizing cost $c$
      in \eqref{eq:GIP-sim},
      where
      a canonical form of deep learning
      for classification is considered
      and 
      the discrepancy measure $d$ is $\ell_2$-distance. 
    Suppose that it has the unique global minimizer at $x^*$.
      Let $\varepsilon \ge 0$ 
      be the approximation error bound
      on $x^*$ for generative model $G_w:
       [0,1 ]^k \mapsto  [0,1 ]^m$
    Then, there exists $\delta(\varepsilon) \ge 0$ such that for any $z^* \in \argmin_{z} c(G_w(z))$, 
        \vspace{0.05cm}
        \begin{align}
        \| G_w(z^*) - x^* \| \le \delta(\varepsilon) \;,
        \end{align}
        of which upper bound $\delta(\varepsilon) \to 0$ as $\varepsilon \to 0$.
      \end{property}
      A rigorous statement of Property~\ref{prop:conti} and its proof are provided in Appendix~\ref{sec:property}, where we prove and use that the cost function 
      is continuous around $x^*$ under the assumptions.
      This property justifies solving the latent space search in \eqref{eq:G-GIP-sim} for FL scenarios training neural network model
      while it requires an accurate generative model.

\paragraph{Parameter space search.}
    
    %
    
    Using the latent space search only, 
    there can be inevitable reconstruction error
    due to the imperfection of generative model.
    This is mainly because we cannot perfectly prepare the generative model for every plausible data in advance. 
    Similar difficulty of the latent space search
    has been reported even when inverting GAN
    \cite{zhu2020indomain, generativePrior, Bau_2019_ICCV}
    for plausible but new data directly, i.e., 
    $\min_z \|G_w(z) - x^*\|$ given $x^*$,
    rather than inverting gradient.
    \citet{generativePrior} propose an instance-specific model adaptation,
    which slightly adjusts the model parameter $w$ to (a part of source image) $x^*$ after obtaining a latent code $z^*$ for $x^*$. 
    Inspired by such an instance-specific adaptation, 
    GIAS performs
    the following parameter space search
    over $w$ preceded by the latent space search over~  $z$:
        \begin{equation} \label{eq:G-GIP-model}
        \min_{
        w_1, ..., w_B} ~  
        c\left(G_{w_1}(z_1), ..., G_{w_B}(z_B) \right) \;, 
    \end{equation}
    where $z_1, \dots, z_B$ are obtained from \eqref{eq:G-GIP-sim}.

    \paragraph{Remark.}
    


    We propose the optimization over $w$ followed by
    that over $z$ sequentially 
    This is to 
    maximally utilize the benefit of mitigating the under-determined issue from reducing the searching space 
    on the pretrained model.
    However, the benefit would be degenerated 
    if $z$ and $w$ are optimized jointly or $w$ is optimized first. 
    We provide an empirical justification
    on the proposed searching strategy in Section~\ref{sec:ablation}.
    
    


        

    We perform each search in GIAS
    using a standard gradient method to the cost function directly. 
    It is worth noting that those optimizations
    \eqref{eq:G-GIP-sim} and \eqref{eq:G-GIP-model}
    with generative model
    can be tackled in a recursive manner
    as R-GAP \cite{rgap}
    reconstructs each layer from output to input. 
    We provide details and performance of the recursive procedure in Appendix~\ref{sec:rgap}, where employing generative model improves the inversion accuracy of R-GAP substantially, while R-GAP apparently suffers from an error accumulation issue when $f_\theta$ is a deep neural network.

\subsection{Gradient inversion to meta-learn generative model}\label{sec:2.2}

    For the case that pretrained 
    generative model is  unavailable, 
    we devise an algorithm
    to train a generative model $G_w$
    for a set $\set{S} = \{(\theta_i, g_i)\}$ of gradient inversion tasks.
    Since each inversion task can be considered as a small learning task
    to adapt generative model per data, 
    we hence call it gradient inversions to meta-learn (GIML).
    The detailed procedure of GIML is presented in Appendix~\ref{sec:algorithm}.
    We start with an arbitrary initialization of $w$,
    and iteratively update
    toward $w'$ from
    a variant of GIAS for 
    $N$ tasks sub-sampled from $\set{S}$,
    which 
    is different than multiple applications of GIAS for each task
    in two folds: (i) $\ell_2$-regularization in latent space search;
    and (ii) an integrated optimization on model parameter.
    The variant first finds optimal latent codes $\vec{z}^*_i = (z_{i1}^*, ..., z_{iB}^*)$
    for each task $i$
    with respect to the same cost function of 
    GIAS but additional $\ell_2$-regularization.
    Note that the latent space search with untrained generative model easily diverges.
    The $\ell_2$-regularization is added to prevent the divergence of $\vec{z}^*_i$.
    Once we obtained $\vec{z}^*_i$'s, 
    $w'$ is computed by
     few steps of gradient descents
    for an integrated parameter search 
    to minimize 
    $\sum_i c(G_{w'} (z^*_{i1}), ..., G_{w'} (z^*_{iB}); \theta_i, g_i)$.
    This is because in GIML, 
    we want meta information $w$ to help  GIAS for each task rather than
    solving individual tasks, while
    after performing GIML to train $w$,
    we perform GIAS to invert gradient with the trained $w$. 
    This is analogous to the Reptile in \cite{nichol2018firstorder}.

\section{Experiments}
\label{sec:exp}

    \begin{figure}[!t]
    \vskip 0.2in
      \centering
      \begin{subfigure}[b]{0.57\textwidth}
        \centering
        \includegraphics[width=1\textwidth, trim={1.2cm, -1.0cm, 0cm, 2cm}]{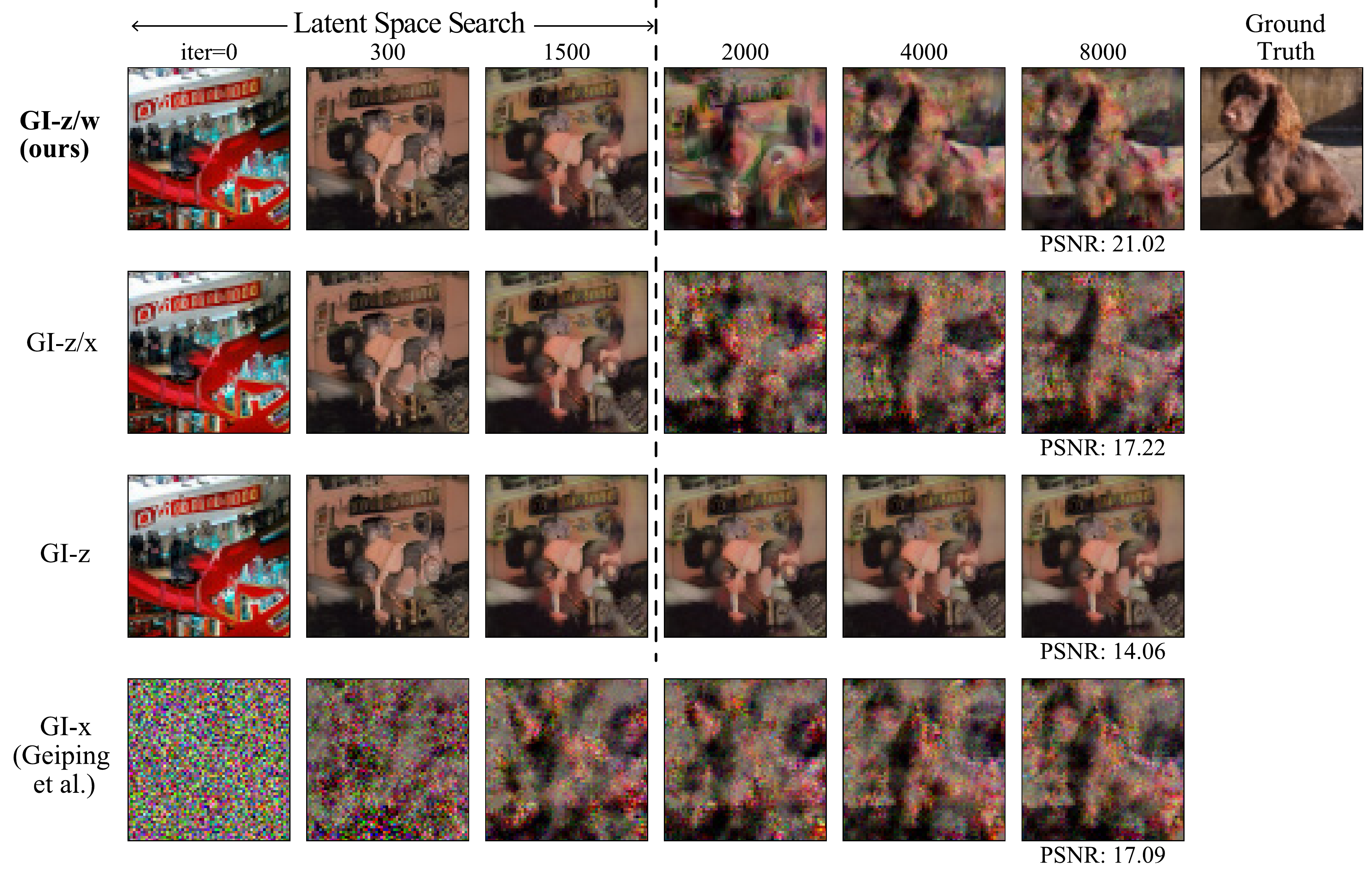}
        \caption{}
        \label{figure:images1}
      \end{subfigure}
      \begin{subfigure}[b]{0.41\textwidth}
        \centering
        \includegraphics[width=1\textwidth, trim={1.0cm, 0.5cm, 0.0cm, -2.0cm}]{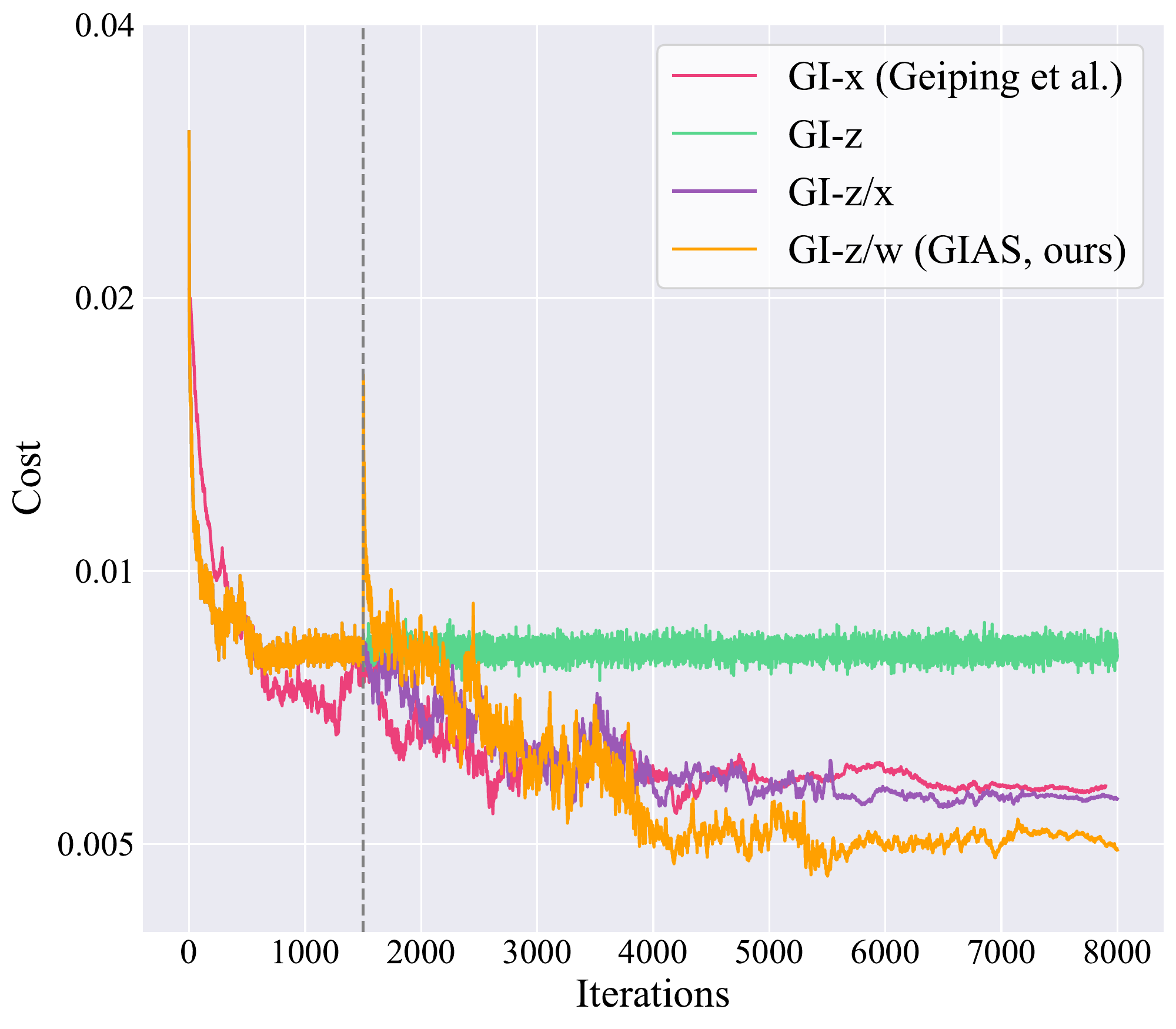}
        \caption{}
        \label{figure:images2}
      \end{subfigure}
      \caption{
            {\em Comparison of different searching spaces.}
            (a) 
            Each row shows reconstructed images of different optimization domains.
            The first three rows share the same latent space search of $1,500$ iterations, and  GI-$z/w$ is verified to be the best option to fully exploits the knowledge inside the generative model.
            (b) 
                Cost function over iterations of different optimization domains. 
      }
      \label{figure:images}
    \end{figure} 

    \begin{figure}[t]
      \centering
      \begin{subfigure}[b]{0.33\textwidth}
         \centering
        \includegraphics[width=\textwidth, trim={1.2cm, 0cm, 2cm, 2cm}]{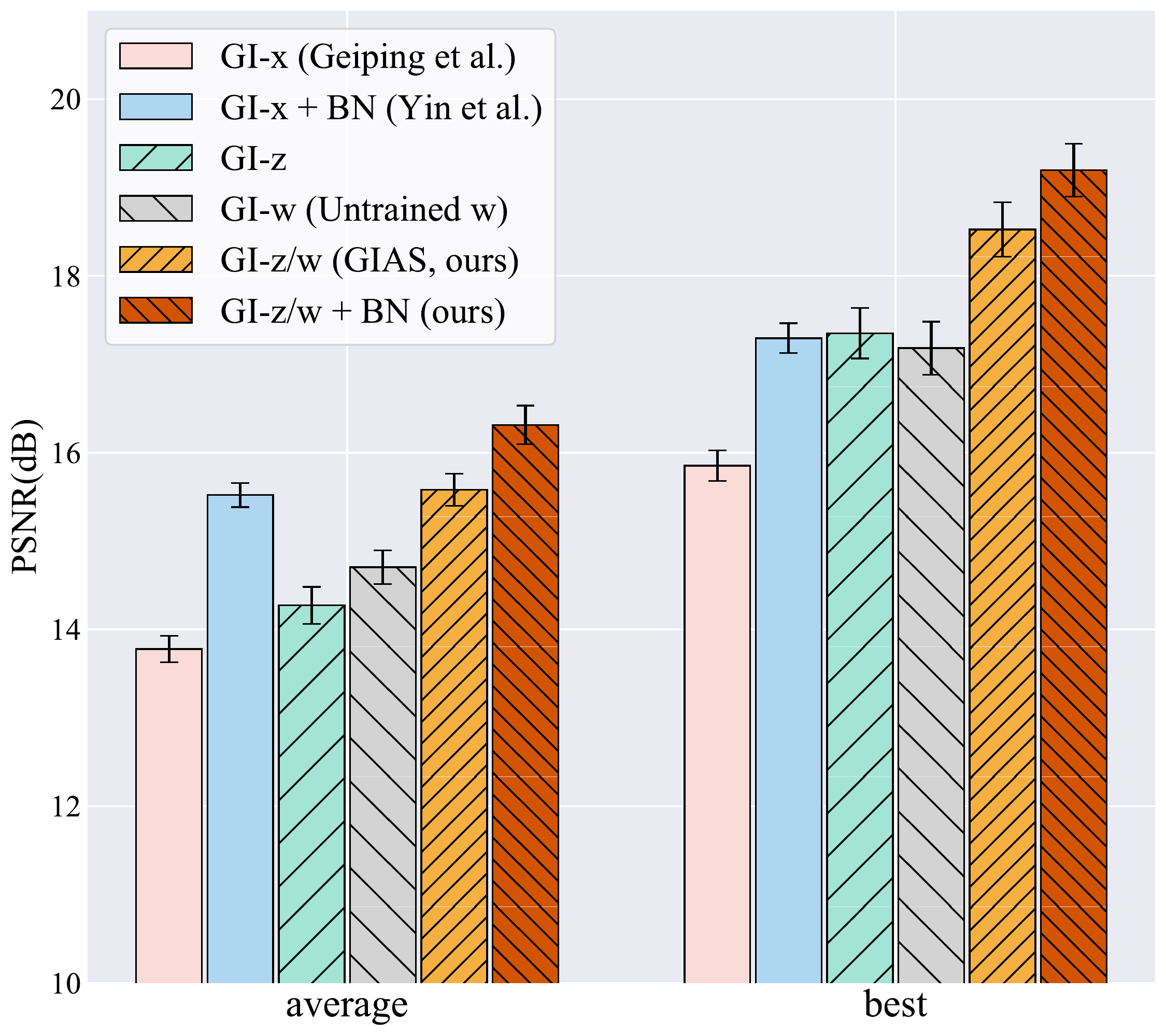}
        \caption{}
        \label{figure:ablation_graph}
      \end{subfigure}
      \hfill
      \begin{subfigure}[b]{0.66\textwidth}
         \centering
        \includegraphics[width=\textwidth, trim={2cm, 1cm, 3cm, 2cm}]{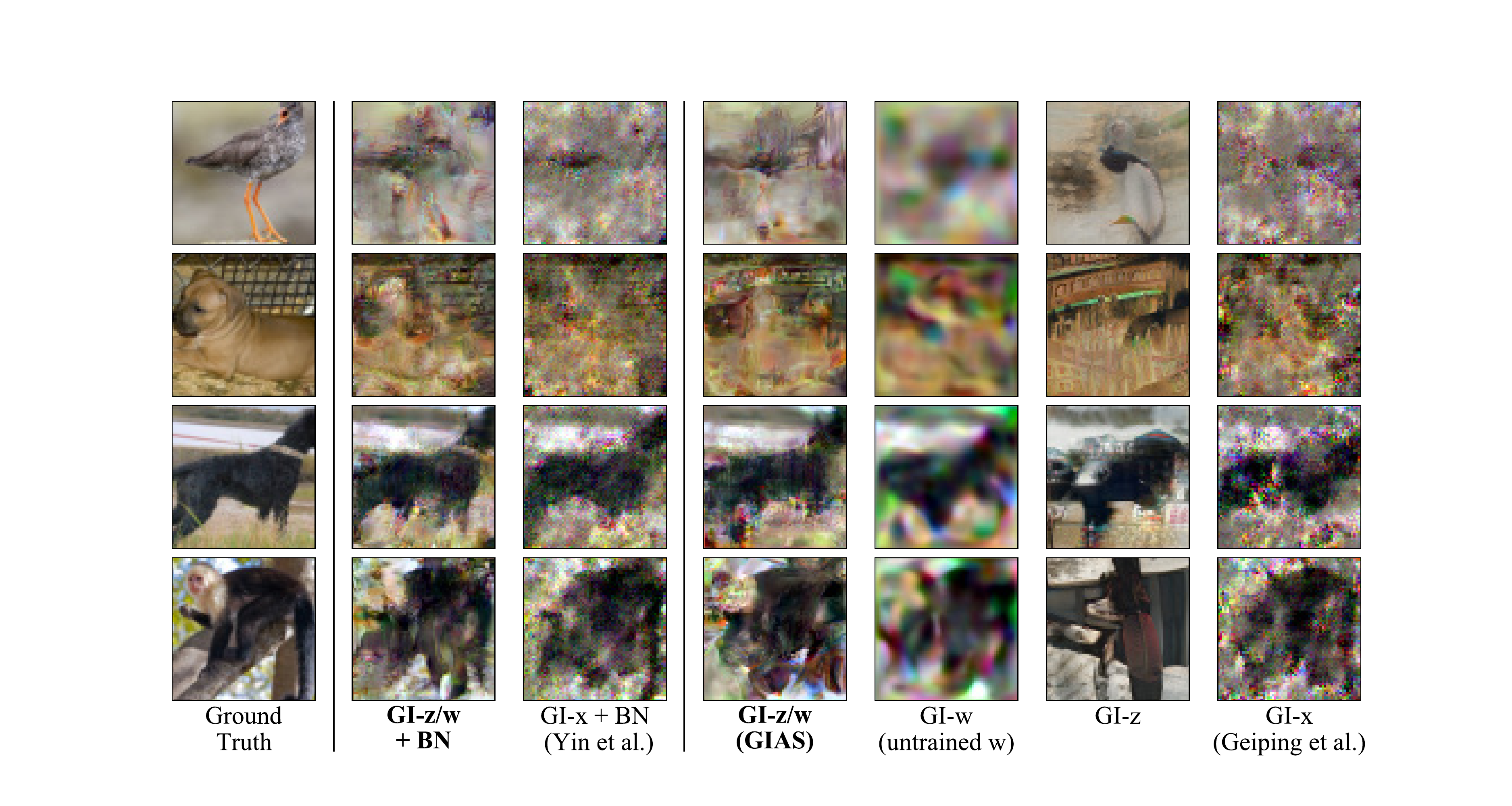}
        \caption{}
        \label{figure:ablation_image}
      \end{subfigure}
      
      \caption{
      \tblue{
            {\em Comparison of \sota models and ours}.
            Replacing GI-$x$ with GI-$z/w$ (GIAS) 
            regardless of using BN \cite{nvidia} or not \cite{geiping} 
            provides substantial improvement in the reconstruction accuracy. 
          (a)
            Average PSNR and best PSNR in a batch throughout the experiments.
          (b)
            An ablation study and comparison of reconstruction results with our models and \sota models. 
            We highlight the proposed models in {\bf bold}.
      }
      }
      \label{figure:ablation}
    \end{figure}


    \paragraph{Setup.} Unless stated otherwise, 
    we consider the image classification task 
    on the validation set of ImageNet~\cite{imagenet} dataset 
    scaled down to $64 \times 64$ pixels (for computational tractability) 
    and use a randomly initialized ResNet18~\cite{he2016deep} for training. 
    For deep generative models in GIAS, we use  StyleGAN2~\cite{Karras2019stylegan2} trained on ImageNet. 
    We use a batch size of $B = 4$ as default 
    and use the negative cosine to measure  the gradient dissimilarity $d(\cdot,\cdot)$.
    We present detailed setup in Appendix~\ref{sec:exp-setting}.
    Our experiment code is available at \url{https://github.com/ml-postech/gradient-inversion-generative-image-prior}.
    
    \vspace{-0.2cm}

    \paragraph{Algorithms.} We evaluate several algorithms
    for the gradient inversion (GI) task in \eqref{eq:GIP-sim}. They differ mainly in which spaces each algorithm searches over: the input $x$, the latent code $z$, and/or the model parameter $w$.
    Each algorithm is denoted by GI-($\cdot$), 
    where the suffix indicates the search space(s).
    For instances, 
    GI-$z/w$ is identical to the proposed method, GIAS,
    and GI-$x$ is the one proposed by \citet{geiping}.

            
            

    \subsection{Justification of GIAS design}
    \label{sec:ablation}
     We first provide an empirical justification 
     of the specific order of searching spaces in GIAS (corresponding to GI-$z/w$) 
     to fully utilize a pretrained generative model. 
     To do so, we provide Figure~\ref{figure:ablation_image} comparing algorithms with different searching spaces: GI-$z/w$, GI-$z/x$, GI-$z$, and GI-$x$, of which the first three share the same latent space search
     over $z$ for the first $1,500$ iterations. 
        As shown in Figure~\ref{figure:images}(a), 
        the latent space search over $z$ 
        quickly finds plausible image in a much shorter number of iterations
        than GI-$x$, while it does not improve after a certain point due to the imperfection of pretrained generative model.
        Such a limitation of GI-$z$ is also captured in Figure~\ref{figure:images}(b), where the cost function of
        GI-$z$ is not decreasing after a certain number of optimization steps. To further minimize the cost function, 
        one alternative to GI-$z/w$ (GIAS) is GI-$z/x$, 
        which can further reduce the loss function whereas
        the parameter search in GI-$z/w$ 
        seems to provide more natural reconstruction of the image
        than GI-$z/x$. The superiority of GI-$z/w$
        over GI-$z/x$ may come from 
        that the parameter space search 
        exploits an implicit bias from optimizing a good architecture 
        for expressing images, {c.f.}, deep image prior \cite{dip}.
        In Appendix~\ref{sec:ffhq_prior} and Figure~\ref{fig:authors}, we also present the same comparison 
        on FFHQ (human-face images) \cite{Karras_2019_CVPR}
        where diversity is much smaller than that of ImageNet. 
        On such a less diverse dataset, the distribution can be easily learned, and the gain from training a generative model is larger. 
        
        

        
        

\newcommand{\etal}{{et al}.\@ }

    
    
    
    
    


\begin{table*}[!t]

\caption{
    Comparison of our methods with state-of-the-art methods. 
    Adding our method makes performance improvement versus two baseline methods.
    PSNR, SSIM, and LPIPS\cite{zhang2018perceptual} are used to evaluate reconstruction results.
    We highlight the best performances in {\bf bold}.
}
\label{tab:against_sota}
\resizebox{\textwidth}{!}{
    \begin{tabular}{c||cccc|cc}
        \toprule
        {Method} & \multicolumn{1}{l}{GI-$x$     \cite{geiping}} & \multicolumn{1}{l}{GI-$z$ (ours)} & \multicolumn{1}{l}{GI-$w$ (ours)} & \multicolumn{1}{l}{GI-$z/w$ (GIAS, ours)} & \multicolumn{1}{l}{GI-$x$+BN     \cite{nvidia}} & \multicolumn{1}{l}{GI-$z/w$+BN (ours)} \\
        
        \midrule

        PSNR $\uparrow$    & $13.78$ & $14.27$ & $14.70$ & $\mathbf{15.58}$ & $15.52$ & $\mathbf{16.31}$ \\ 
        SSIM $\uparrow$    & $0.2542$ & $0.3106$ & $0.3519$ & $\mathbf{0.3895}$ & $0.3513$ & $\mathbf{0.4311}$ \\ 
        LPIPS $\downarrow$ & $0.4376$ & $0.3233$ & $0.5121$ & $\mathbf{0.3023}$ & $0.3645$ & $\mathbf{0.2861}$ \\ 
        
        \bottomrule
    \end{tabular}
}

\vskip -0.3cm

\end{table*}
    
                
        
        \begin{figure}[t]
            \begin{subfigure}[b]{0.326\textwidth}
              \includegraphics[height=4.4cm]{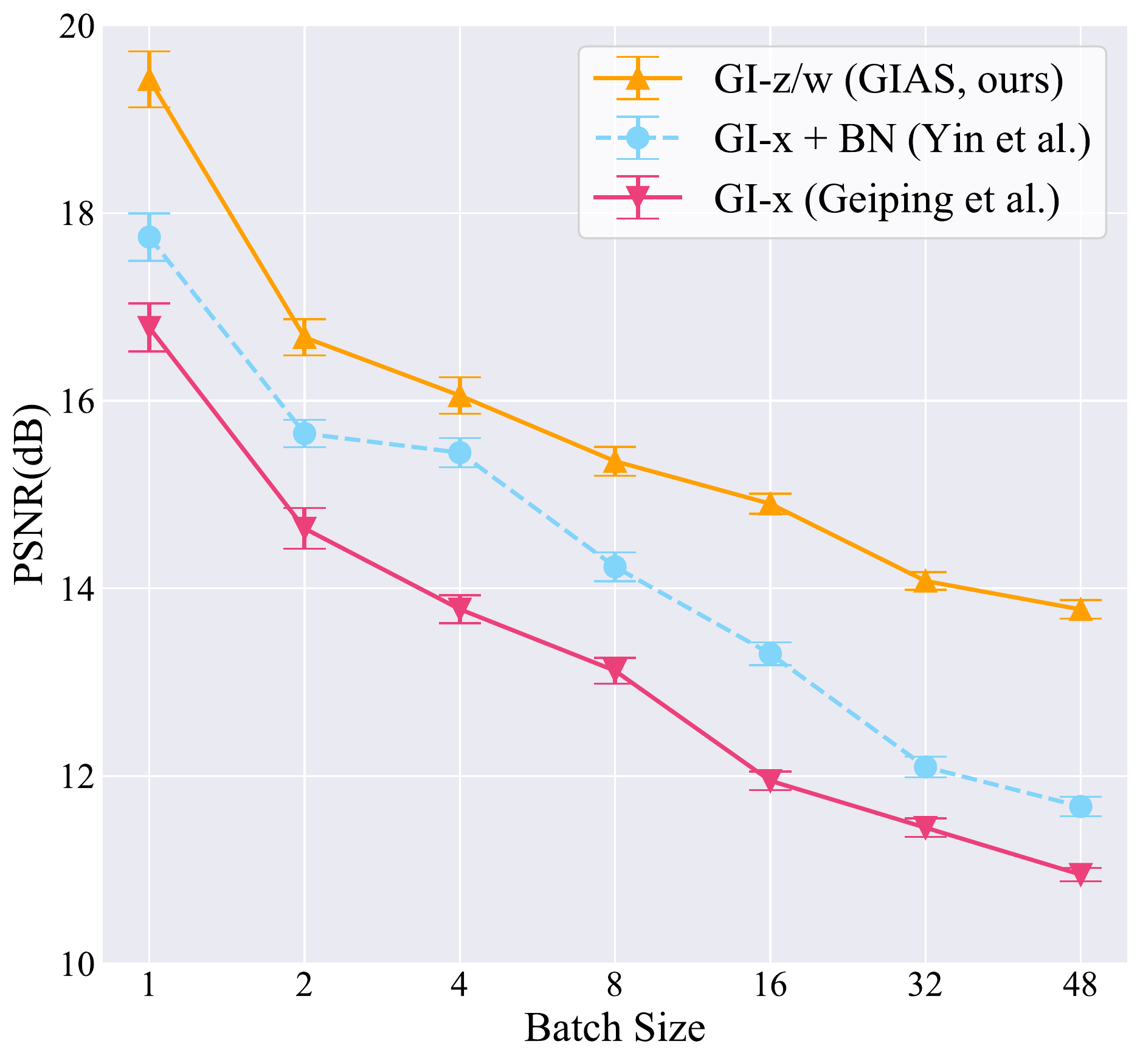} 
              \caption{}
              \end{subfigure}
            \begin{subfigure}[b]{0.3\textwidth}
              \includegraphics[height=4.35cm]{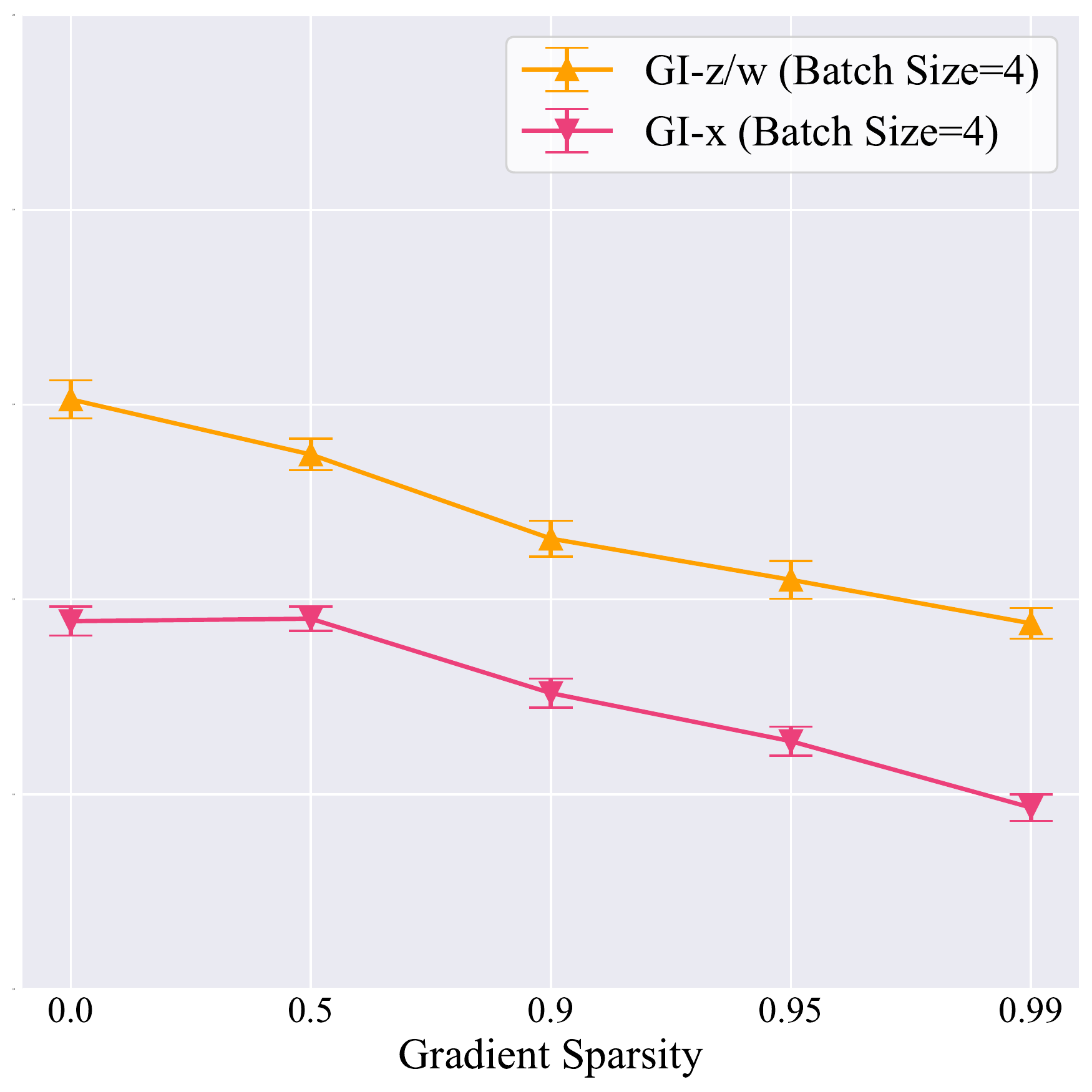} 
              \caption{}
            \end{subfigure}
            \begin{subfigure}[b]{0.3\textwidth}
              \includegraphics[height=4.35cm]{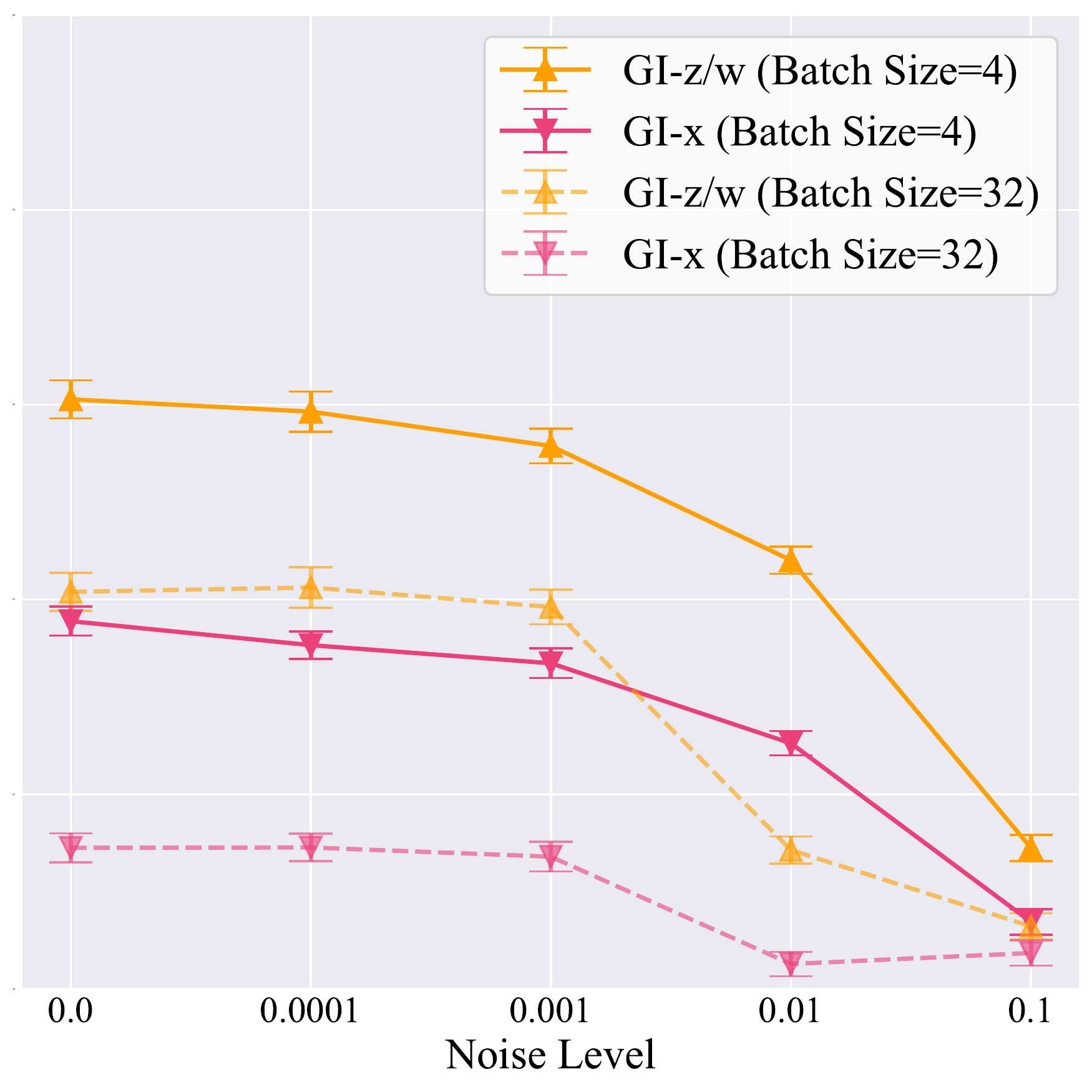} 
              \caption{}
            \end{subfigure}
            \caption{
            \tblue{
                {\em Comparison of \sota models and GI-$z/w$ with varying difficulties}.
                Larger batch size, higher sparsity, and larger gradient noise 
                increases reconstruction difficulty.
                GI-$z/w$ always surpasses GI-$x$ thanks to the pretrained generative model.
                All subfigures share the y-axis.
            }}
            \label{figure:compressed}
        \end{figure}

    \subsection{The gain from fully exploiting pretrained generative model}

    \paragraph{Comparison with state-of-the-art models.} 
        Our method can be easily added to previous methods \cite{geiping, nvidia}.
        In Table~\ref{tab:against_sota}
        and Figure~\ref{figure:ablation}, 
        we compare the state-of-the-art methods 
        both with and without the proposed generative modelling.
        In Table~\ref{tab:against_sota},
        comparing GI-$x$ to GI-$z/w$
        and GI-$x$ + BN to GI-$z/w$ + BN,
        adding the proposed generative modelling 
        provides additional  gain
        in terms of all the measures (PSNR, SSIM, LPIPS) of 
        reconstruction quality.
        GI-$z/w$ without BN has lower reconstruction error 
        than GI-$x$ + BN, which is the method of \cite{nvidia}.
        This implies that the gain from the generative model 
        is comparable to that from BN statistics. However,  while the generative model only requires a global (and hence coarse) 
        knowledge on
        the {\it entire dataset}, BN statistics are local to the batch in hand and hence requires significantly more 
        detailed information on the {\it exact batch} used to compute gradient.
        As shown in Figure~\ref{figure:ablation}, 
        the superiority of our method compared to the others
        is clear in terms of 
        the best-in-batch performance than the average one,
        where the former is more suitable to show actual privacy threat in the worst case than the latter.
        It is also interesting to note that GI-$w$ with untrained $w$
        provides  substantial gain compared to GI-$x$.
        This may imply that there is a gain of the implicit bias, c.f., 
        \cite{dip}, from training the architecture of deep generative model.

    \tblue{
    \paragraph{Evaluation against possible defense methods} 
        We evaluate the gain of using a generative model
        for various FL scenarios with varying levels of difficulty in the inversion. 
        As batch size, gradient sparsity\footnote{Having gradient sparsity 0.99\% implies that we reconstruct data
        from 1\% of the gradient after removing 99\% elements with the smallest magnitudes at each layer.}~\cite{wei} 
        and gradient noise level increase,
        the risk of having under-determined inversion
        increases and the inversion task becomes more challenging. 
        Figure~\ref{figure:compressed} shows that for all the levels of difficulty, the generative model provides
        significant gain in reconstruction quality. 
        In particular, the averaged PSNR of GI-$x$ with a batch size of 4 is comparable
        to that of GI-$z/w$ with a batch size 32.
        It is also comparable to that of GI-$z/w$ with a gradient sparsity of 99\%. 
        To measure the impact of the noisy gradient, 
        we experimented gradient inversion with varying gaussian noise level in aforementioned settings.
        Figure~\ref{figure:compressed}(c) shows that adding enough noise to the gradient can mitigate the privacy leakage.
        GI-$z/w$ with a noise level of $0.01$, which is relatively large, still surpasses GI-$x$ without noise.
        A large noise of $0.1$ can diminish the gain of exploiting a pretrained generative model.
        However, the fact that adding large noise to the gradient slows down training 
        makes it difficult for FL practitioners to choose suitable hyperparameters.
        The results imply our method is more robust to defense methods against gradient inversion, but can be blocked by a high threshold.
        Note that our results of gradient sparsity and gradient noise implies the Differential Privacy(DP) is still a valid defense method, when applied with a more conservative threshold.
        For more discussion about possible defense methods in FL framework, see Appendix~\ref{sec:defense}.
        
    }

    \begin{figure}[t]
          \centering
          
      \begin{subfigure}[b]{0.55\textwidth}
        \includegraphics[width=\textwidth, trim={1cm, 0cm, 0cm, 2cm}]{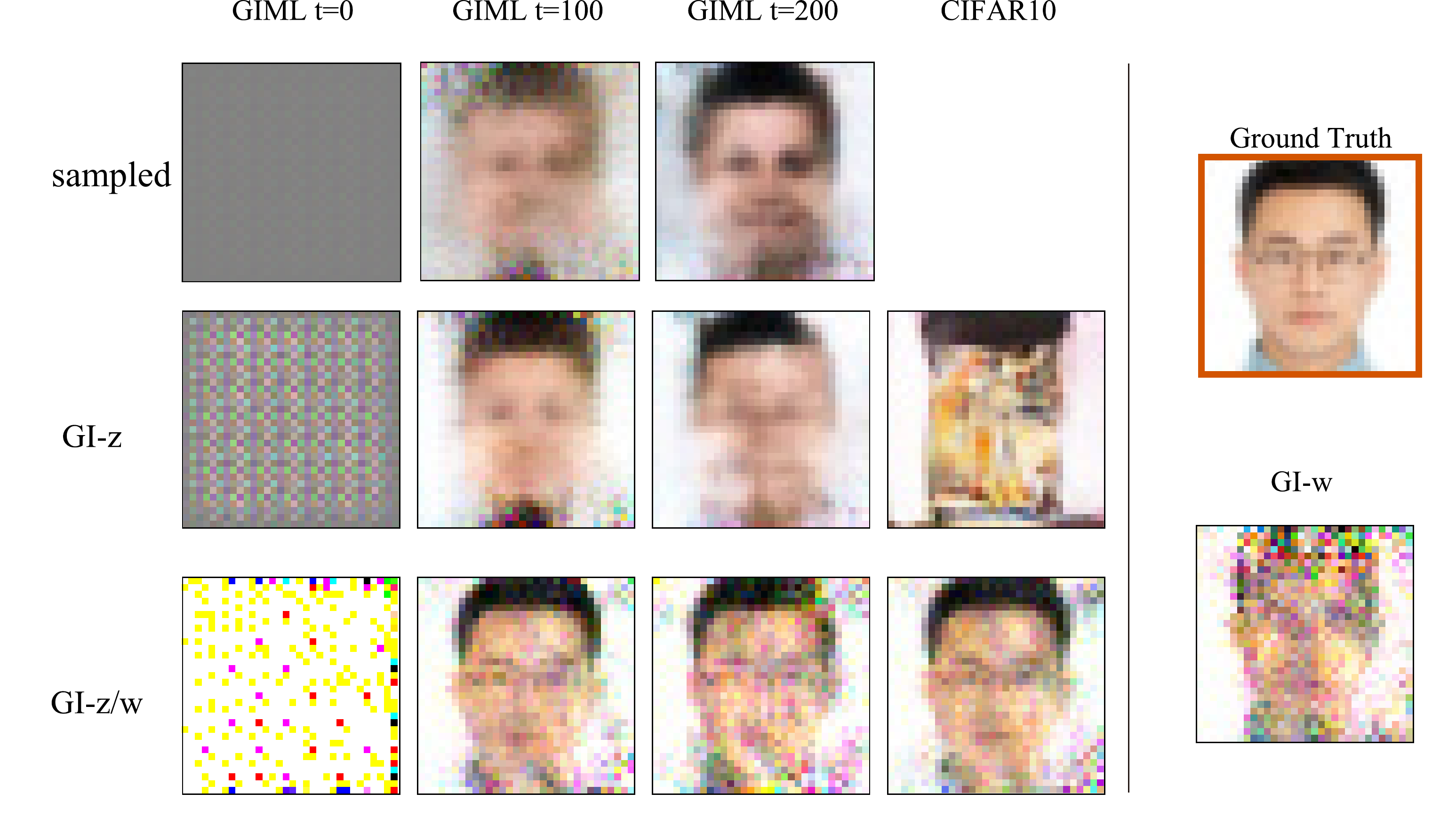}
        \caption{}
      \end{subfigure}
      \begin{subfigure}[b]{0.43\textwidth}
        \includegraphics[width=\textwidth, trim={0cm, 0cm, 0cm, 0cm}]{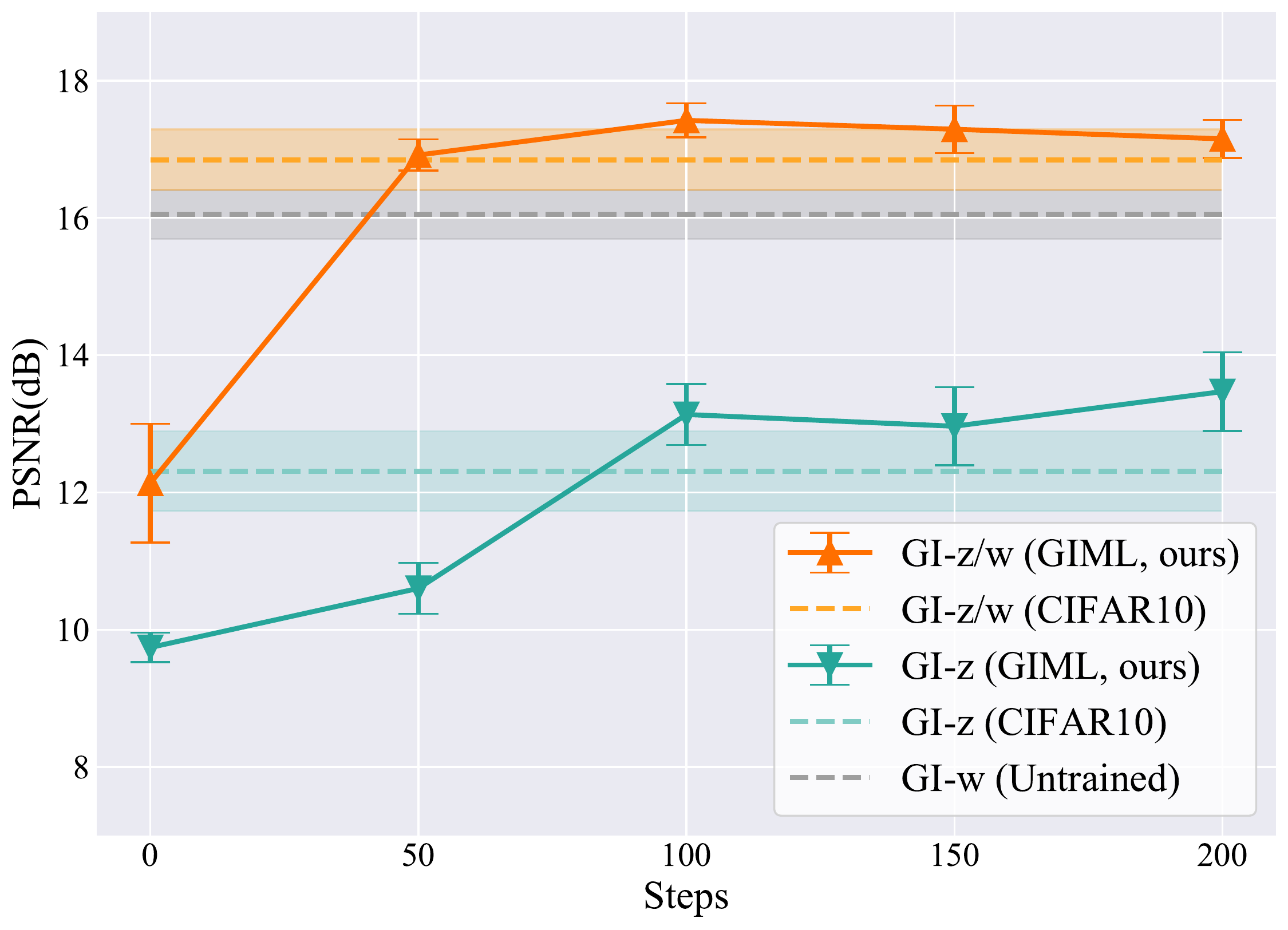}
        \caption{}
      \end{subfigure}
          \caption{
          \tblue{
                {\em Qualitative and quantitative result of GIML}.
              (a) 
                Results validating generative model trained with GIML.
                  Images on the first row are sampled from different GIML training steps. 
                The same latent code $z$ was used to sample images in same rows.
                Images on the second row and third row are results of GI-$z$ and GI-$z/w$ using generative model trained with GIML and pretrained model which is trained with CIFAR10 images.
                Experiments were done with gradient sparsity 0.95 for comparison in difficult setting.
                Last column represents the ground truth image and result of GI-$w$ with untrained model.
                (b)
                    A comparison of GIAS with meta-learned generative model and GIAS using improper generative model. 
                    Proper generative model boosts GIAS performance. 
          }
          }
            \label{fig:trained_sampled}
            \label{fig:trained_result}
            \label{figure:sampled_trained}    
            \label{fig:trained_psnr}
    \end{figure}

\subsection{Learning generative model from gradients}
    We demonstrate the possibility of {\em training}
    a generative model only with gradients.
    For computational tractability, we use DCGAN and images from FFHQ~\cite{Karras_2019_CVPR} resized to 32x32.
    We generate a set of gradients
    from 4 rounds of gradient reports from 200 nodes,
    in which node computes gradient for a classification task based on the annotation provided in \cite{ffhqfeatures}. From the set of gradients, we perform GIML to train a DCGAN to potentially generate FFHQ data.
    
    \tblue{
    Figure~\ref{figure:sampled_trained} shows the evolution of generative model 
    improves the reconstruction quality when performing either GI-$z$ and GI-$z/w$.
    We can clearly see the necessity of parameter space search. 
    Figure~\ref{figure:sampled_trained}(a) shows that 
    the quality of images from the generative model 
    is evolving in the training process of GIML.
    As the step $t$ of GIML increases, the generative model $G_{w^{(t)}}(z)$ for arbitrary $z$ outputs more plausible image of human face.
    When using generative model trained on wrong dataset (CIFAR10),
    GI-$z$ completely fails at recovering data.
    
    In Figure~\ref{figure:sampled_trained}(b), as GIML iteration step increases, the performance of GI-$z$ and GI-$z/w$ with GIML surpass GI-$z$ and GI-$z/w$ with wrong prior knowledge.
    GI-$z/w$ using generative model trained on wrong dataset and 
    GI-$w$ which starts with an untrained generative model
    show lower averaged PSNR 
    compared to GI-$z/w$ with GIML.
     GI-$z/w$
     with GIML to train generative model on right data shows the best performance
     in terms of not only quality
     (Figure~\ref{figure:sampled_trained})
     but also convergence speed.
     We provide a comparison of the convergence speed in Appendix~\ref{sec:convergence_speed}.
    
    }

\section{Conclusion}
\label{sec:conclusion}

We propose GIAS fully exploit 
the prior information on user data 
from a pretrained generative model
when inverting gradient. 
We demonstrate significant privacy leakage  using GIAS with pretrained generative model
in various challenging scenarios, where our method provides 
substantial gain 
additionally
 to  any other existing methods \cite{geiping, nvidia}.
In addition, we propose GIML which can train a generative model using only the gradients seen  in the  FL classifier training. 
We experimentally show that GIML can meta-learn a generative model
on the user data from only gradients,
which improves the quality of each individual  recovered image.
To our best knowledge, GIML is the first capable of learning explicit 
prior on a set of gradient inversion tasks.

\section*{Acknowledgments}

This work was partly supported by Institute of Information \& communications Technology Planning \& Evaluation (IITP) grant funded by the Korea government (MSIT) (No. 2019-0-01906, Artificial Intelligence Graduate School Program (POSTECH)) and (No. 2021-0-00739, Development of Distributed/Cooperative AI based 5G+ Network Data Analytics Functions and Control Technology).
Jinwoo Jeon and Jaechang Kim were supported by the Institute of Information \& Communications Technology Planning \& Evaluation (IITP) grant funded by Korea(MSIT) (2020-0-01594, PSAI industry-academic joint research and education program).
Kangwook Lee was supported by NSF/Intel Partnership on Machine Learning for Wireless Networking Program under Grant No. CNS-2003129 and NSF Award DMS-2023239.
Sewoong Oh acknowledges funding from NSF IIS-1929955, NSF CCF 2019844, and Google faculty research award.

\bibliography{ref}

\clearpage

\appendix

\section*{Appendix}

\renewcommand{\thefigure}{A\arabic{figure}}
\setcounter{figure}{0}

      \section{Detailed algorithms}
      \label{sec:algorithm}

\begin{algorithm}[!ht]
    \newcommand{\batch}{\mathcal{B}}
    \caption{Gradient Inversion in Alternative Spaces (GIAS)}
    \label{algo:GIAS}
    \begin{algorithmic}[1]
        \REQUIRE
        learning model $f_\theta$;
        target gradient $g = \nabla_\theta \sum_{j=1}^B \ell(f_\theta(x_j^*), y_j^*)$ 
        to be inverted;     
        batch size B; pre-trained generative model $G_w$; 
        
        \smallskip
        
        \STATE Initialize 
        $\boldsymbol{z} := ( z_{1}, ..., z_{B})$ 
        randomly
        
        \STATE Find  $\boldsymbol{z} \leftarrow \argmin_{\boldsymbol{z}} 
        c( G_{w} (z_1), ..., G_{w} (z_B))$
        \\
        \COMMENT{Latent space search}

        \STATE 
        Set
        $\boldsymbol{w} := (w_1, \dots, w_B) \leftarrow (w, \dots, w)$
        
        \STATE Find $\boldsymbol{w} \leftarrow \argmin_{\boldsymbol{w}} 
        c(G_{w_1} (z_1), \dots, G_{w_B} (z_B))$
        \\
                \COMMENT{Parameter space search} 

        
        \STATE Return result:  $ G_{w'_1}(z_1), \dots, G_{w'_B}(z_B) $
        
    \end{algorithmic}   
\end{algorithm}

    \begin{algorithm}[!ht]
        \newcommand{\data}{\mathcal{D}}
        \newcommand{\batch}{\mathcal{B}}
        \newcommand{\loss}{\mathcal{L}}
        \newcommand{\inp}{\mathbf{x}}
        \newcommand{\learner}{f_\theta}
        \newcommand{\lossi}{\loss_{\batch_i}}
    
        \caption{Gradient Inversion to Meta-Learn generative model (GIML)}
        \label{algo:gen-meta}
        \begin{algorithmic}[1]
            \REQUIRE 
            inversion task set $\mathcal{S}$;
            task batch size $N$; data batch size $B$ (per gradient); number of local iterations $\tau$; $z$-regularizer coefficient $\lambda$;
step sizes $\alpha$, $\beta$;
            \STATE Initialize $w$ randomly
            \WHILE{not done}
                \STATE Sample
                a batch of inversion tasks $(\theta_1, g_1), ..., (\theta_N, g_N)$ from $\set{S}$
                \STATE $ w' \leftarrow w$ 
                \FORALL{$i = 1,\dots, N$} 
                     \STATE $\vec{z}^*_i 
                     \leftarrow \mathop{\mathrm{argmin}}_{\vec{z}_i} 
                            c ( G_{w'} (z_{i1}), \dots , G_{w'} (z_{iB}); \theta_i, g_i ) 
                            + \lambda \sum_j  \| z_{ij} \|_2 $ 
                            \COMMENT{Regularized latent space search}
                 \ENDFOR
                     
                     \FORALL{$t = 1, \dots, \tau$}
                         \STATE 
                         $w' \leftarrow w' -\alpha \nabla_{w'} 
                         \sum_i 
                            c \left( 
                            G_{w'} (z^*_{i1}), \dots, G_{w'} (z^*_{iB}); \theta_i, g_i
                            \right)$
                         \COMMENT{Meta parameter space search}
                     \ENDFOR
                 \STATE Update
                 $w \leftarrow w -
                 \beta (w-w') = (1-\beta) w
                 + \beta w'$
            \ENDWHILE
        \end{algorithmic}
    \end{algorithm}

\section{Proof of Property~\ref{prop:conti}}
\label{sec:property}

To prove Property~\ref{prop:conti},
we first conclude the same statement of Property~\ref{prop:conti} 
assuming the inversion problem is continuous at $x^*$ (Lemma~\ref{lem:conti2}),
and then show that the standard scenario described in Property~\ref{prop:conti}
guarantees the desired continuity  (Lemma~\ref{lem:conti-prob}).
The canonical form of learning model mentioned in \ref{prop:conti}
is described by
the assumptions of Lemma~\ref{lem:conti-prob}.

\begin{lemma}[An extension of Property~\ref{prop:conti} in the main text] \label{lem:conti2}
      For an input data $x^* \in {[0,1]^m}$, 
      consider the gradient inversion problem of minimizing cost $c(x)$
      in \eqref{eq:GIP-sim} where $c(x)$ is continuous.
    Suppose that it has the unique global minimizer at $x^*$.
      Let $\varepsilon \ge 0$ 
      be the approximation error bound
      on $x^*$ for generative model $G_w: {[0,1]^k} \mapsto {[0,1]^m}$
      with $k \le m$, i.e.,
      $\min_{z \in{[0,1]^k}} \|x^* -  G_w(z)\| \le  \varepsilon$.
    Then, there exists $\delta(\varepsilon) \ge 0$ such that for any $z^* \in \argmin_{z} c(G_w(z))$, 
        \begin{align}
        \| G_w(z^*) - x^* \| \le \delta(\varepsilon) \;,
        \end{align}
        of which upper bound $\delta(\varepsilon) \to 0$ as $\varepsilon \to 0$.
      \end{lemma}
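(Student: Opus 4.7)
The plan is to treat this as a ``well-posedness under approximation'' statement: since $x^*$ is the unique global minimizer of a continuous cost $c$ on the compact cube $[0,1]^m$, any point whose cost only slightly exceeds $c(x^*)$ is forced to be close to $x^*$. I would therefore first convert the approximation hypothesis on $G_w$ into a bound on $c(G_w(z^*))$, and then convert that cost bound into a spatial bound via a compactness argument that exploits uniqueness of the minimizer.

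For the first conversion, I would pick a witness $\tilde z \in [0,1]^k$ of the approximation guarantee, i.e.\ $\|G_w(\tilde z) - x^*\| \le \varepsilon$, which exists by hypothesis. Continuity of $c$ on the compact set $[0,1]^m$ gives a modulus of continuity $\omega_c$ at $x^*$ with $\omega_c(\varepsilon) \downarrow 0$ as $\varepsilon \downarrow 0$ and
\begin{equation*}
|c(x) - c(x^*)| \le \omega_c(\varepsilon) \quad \text{whenever} \quad \|x - x^*\| \le \varepsilon .
\end{equation*}
In particular $c(G_w(\tilde z)) \le c(x^*) + \omega_c(\varepsilon)$, and by optimality of $z^*$ for $c \circ G_w$,
\begin{equation*}
c(G_w(z^*)) \;\le\; c(G_w(\tilde z)) \;\le\; c(x^*) + \omega_c(\varepsilon).
\end{equation*}

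For the second conversion, I would introduce the growth function $\eta(\delta) := \inf\{ c(x) - c(x^*) : x \in [0,1]^m,\; \|x - x^*\| \ge \delta \}$. Continuity of $c$ together with compactness of $[0,1]^m \cap \{\|x - x^*\| \ge \delta\}$ ensures that the infimum is attained, and the unique-minimizer hypothesis then forces $\eta(\delta) > 0$ for every $\delta > 0$. Contrapositively, $c(G_w(z^*)) - c(x^*) < \eta(\delta)$ implies $\|G_w(z^*) - x^*\| < \delta$. Setting $\delta(\varepsilon) := \inf\{\delta > 0 : \eta(\delta) > \omega_c(\varepsilon)\}$ therefore yields the claimed bound $\|G_w(z^*) - x^*\| \le \delta(\varepsilon)$, and $\delta(\varepsilon) \to 0$ as $\varepsilon \to 0$ follows because $\omega_c(\varepsilon) \to 0$ while $\eta$ is strictly positive on $(0,\infty)$.

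The one step that I expect to require genuine care is the strict positivity $\eta(\delta) > 0$ for every $\delta > 0$: this is exactly where the unique-minimizer hypothesis is used, via a standard extraction argument (if $\eta(\delta) = 0$, a minimizing sequence in the compact set $\{\|x-x^*\|\ge \delta\}\cap[0,1]^m$ has a subsequential limit $x'$ with $c(x') = c(x^*)$ and $x' \ne x^*$, contradicting uniqueness). Everything else — producing the witness $\tilde z$, invoking uniform continuity to get $\omega_c$, and using optimality of $z^*$ — is routine. The companion Lemma~\ref{lem:conti-prob} would then be responsible for checking that the canonical classification setup of Property~\ref{prop:conti} with $\ell_2$ gradient discrepancy actually satisfies the continuity and uniqueness hypotheses used here.
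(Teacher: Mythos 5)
Your proof is correct and follows essentially the same route as the paper's: bound the excess cost $c(G_w(z^*)) - c(x^*)$ via the witness $\tilde z$, continuity at $x^*$, and optimality of $z^*$, then convert a small cost excess into a small distance using compactness of $[0,1]^m$ and uniqueness of the minimizer. The only difference is that you make the argument quantitative (via $\omega_c$ and the growth function $\eta$, whose strict positivity is exactly the contradiction step the paper leaves implicit), whereas the paper states these steps sequentially and declares them ``straightforward''; your version is the more complete write-up.
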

      \begin{proof}[Proof of Lemma~\ref{lem:conti2}]
      From the assumptions that $x^*$ is the unique minimizer
      and $c(x)$ is continuous on $[0,1]^m$,
      it follows that for $x \in  [0,1]^m$, 
      if $c(x) \to c(x^*)$, then $x \to x^*$. 
      This can be proved by contradiction.
      Then we have that 
      for $\varepsilon > 0$, there exists $\delta(\varepsilon) > 0$
      such that if $c(x) \le \varepsilon$, then $\|x - x^*\| \le \delta(\varepsilon)$ where $\delta(\varepsilon) \to 0$ as $\varepsilon \to 0$.
      From the continuity of $c(x)$, it is straightforward to check that
      $c(G_w(z^*(\varepsilon))) \to c(x^*)$ as $\varepsilon \to 0$.
      This completes the proof.
      \end{proof}

\begin{lemma} \label{lem:conti-prob}
      For an input data $x^* \in \mathbb{R}^m$, 
      consider
      the gradient inversion problem of minimizing cost $c$
      in \eqref{eq:GIP-sim},
      where
      the learning model $f_\theta$ is 
      a standard form of $R$-layer neural network $f_\theta(x)= \Theta_R \sigma_{R-1}(\Theta_{R-1}\sigma_{R-2}(... \Theta_1 x))$
      with $\Theta_{r} \in \mathbb{R}^{m_r \times m_{r-1}}$ for each $r$, where $m_R= L$
      and $m_0 = m$,
      and  C1 ({continuously differentiable}) activation $\sigma$'s
      (e.g., sigmoid and exponential linear), 
      loss function $\ell$ is C1  (e.g., logistic and exponential),
      and the discrepancy measure $d$ is $\ell_2$-distance. 
    Then, the corresponding cost function $c(x)$
    is continuous with respect to $x \in \mathbb{R}^m$.
      \end{lemma}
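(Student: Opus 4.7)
The plan is to show that the map $x \mapsto \nabla_\theta \ell(f_\theta(x), y^*)$ is continuous on $\mathbb{R}^m$ for each fixed $\theta$; since $d(\cdot,\cdot)$ is $\ell_2$-distance (hence continuous) and $g$ is a fixed constant, continuity of $c(x) = d(\nabla_\theta \ell(f_\theta(x), y^*), g)$ in $x$ follows immediately by composition, and the batch case follows by taking a continuous average of continuous maps.

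First I would write $f_\theta$ layer by layer, introducing the pre-activations $h_r(x) = \Theta_r a_{r-1}(x)$ and post-activations $a_r(x) = \sigma_r(h_r(x))$ with $a_0(x) = x$, so that $f_\theta(x) = h_R(x)$. Because each $\sigma_r$ is $C^1$ (hence continuous), each $\Theta_r$ is a linear (hence continuous) map, and compositions and products of continuous maps are continuous, every $h_r$ and $a_r$ is continuous in $x$. This also makes the output $f_\theta(x)$ and the loss value $\ell(f_\theta(x), y^*)$ continuous in $x$ since $\ell$ is $C^1$.

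Next I would write out the backpropagation recursion for $\nabla_\theta \ell(f_\theta(x), y^*)$. Define the backpropagated error $e_R(x) = \nabla_{h_R}\ell(h_R(x), y^*)$, which is continuous in $x$ because $\ell$ is $C^1$ and $h_R$ is continuous. Then recursively $e_{r-1}(x) = \Theta_r^\top \bigl( e_r(x) \odot \sigma_{r-1}'(h_{r-1}(x)) \bigr)$, where $\sigma_{r-1}'$ is continuous because $\sigma_{r-1}$ is $C^1$. By induction from $r=R$ downward, each $e_r(x)$ is a continuous function of $x$, since it is obtained from continuous maps by pointwise products, linear transformations, and composition. Finally, the gradient block for layer $r$ is $\nabla_{\Theta_r}\ell = \bigl(e_r(x) \odot \sigma_r'(h_r(x))\bigr)\, a_{r-1}(x)^\top$ (with the appropriate modification at the output layer), which is continuous in $x$ as a product and outer product of continuous maps. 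Stacking these blocks shows $x \mapsto \nabla_\theta \ell(f_\theta(x), y^*)$ is continuous, and continuity of $c(x)$ follows.

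There is no real obstacle here; the only care needed is to be explicit about where the $C^1$ hypothesis on $\sigma$ and $\ell$ is used, namely to ensure that $\sigma_r'$ and $\nabla \ell$ (which appear in the backprop formulas) are themselves continuous. The argument would break for activations like ReLU only at a measure-zero set, but the stated $C^1$ assumption rules that out and makes the continuity hold everywhere on $\mathbb{R}^m$, completing the proof.
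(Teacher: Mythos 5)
Your proposal is correct and follows essentially the same route as the paper, which simply notes that compositions of $C^1$ maps are $C^1$, so the gradient $\nabla_\theta \ell(f_\theta(x), y^*)$ is continuous in $x$, and then composes with the continuous $\ell_2$ discrepancy. Your version is just more explicit: by writing out the backpropagation recursion you exhibit $\nabla_\theta \ell$ directly as a composition, product, and outer product of continuous functions of $x$, which makes precise the paper's one-line claim that joint $C^1$-smoothness of the loss yields continuity of its $\theta$-gradient as a function of $x$.
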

      \begin{proof}[Proof of Lemma~\ref{lem:conti-prob}]
      Note that the standard model $f_\theta$ includes multi-layer perceptron or convolutional neural network.
      It is $C1$ since the composition of C$k$\footnote{the $k$-th derivative is continuous} functions is C$k$.
      Hence, the gradient is continuous w.r.t. $x$.
      In addition, the cost function $c(x)$
      is continuous since the gradient and the choice of discrepancy measure are continuous.
      This concludes the proof.
      \end{proof}
      
      The proof of Property~\ref{prop:conti} is straightforward from
      Lemmas~\ref{lem:conti2}~and~\ref{lem:conti-prob}.
      

    \begin{figure}[ht]
    \vskip 0.2in
      \centering
      
      \includegraphics[width=\textwidth, trim={0.8cm, 0cm, 1cm, 0cm}]{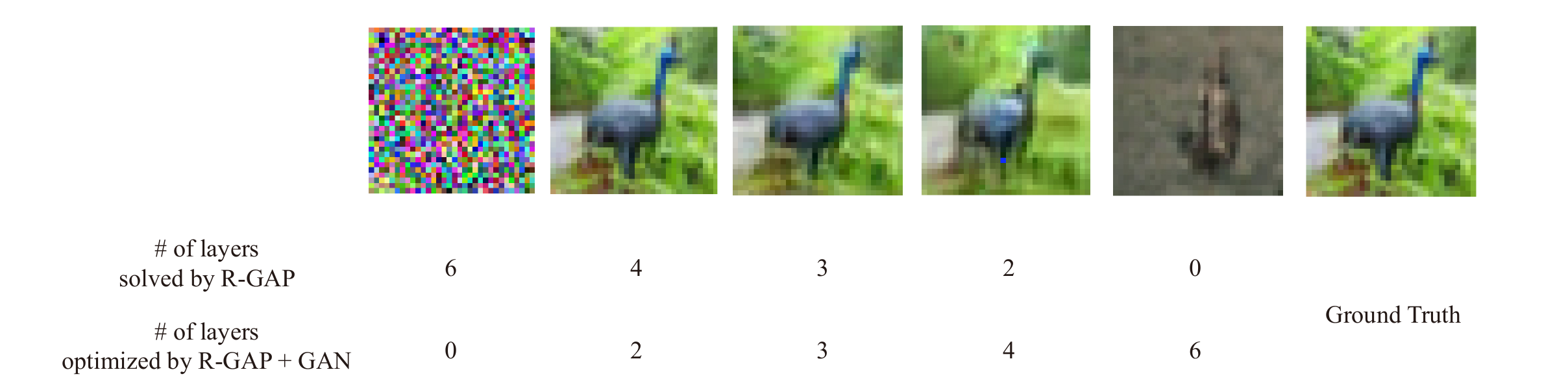}
      \caption{
          Comparison of R-GAP and R-GAP with a generative model.
          The second convolution layer is rank-deficient and R-GAP should solve under-determined problem.
          An under-determined problem is solved by using generative model.
          However, the error per layer increases much faster than R-GAP.
      }
      \label{figure:rgap}
    \vskip -0.2in
    \end{figure}

    \begin{figure}[ht]
    \vskip 0.2in
      \centering
      
      \includegraphics[width=0.6\columnwidth]{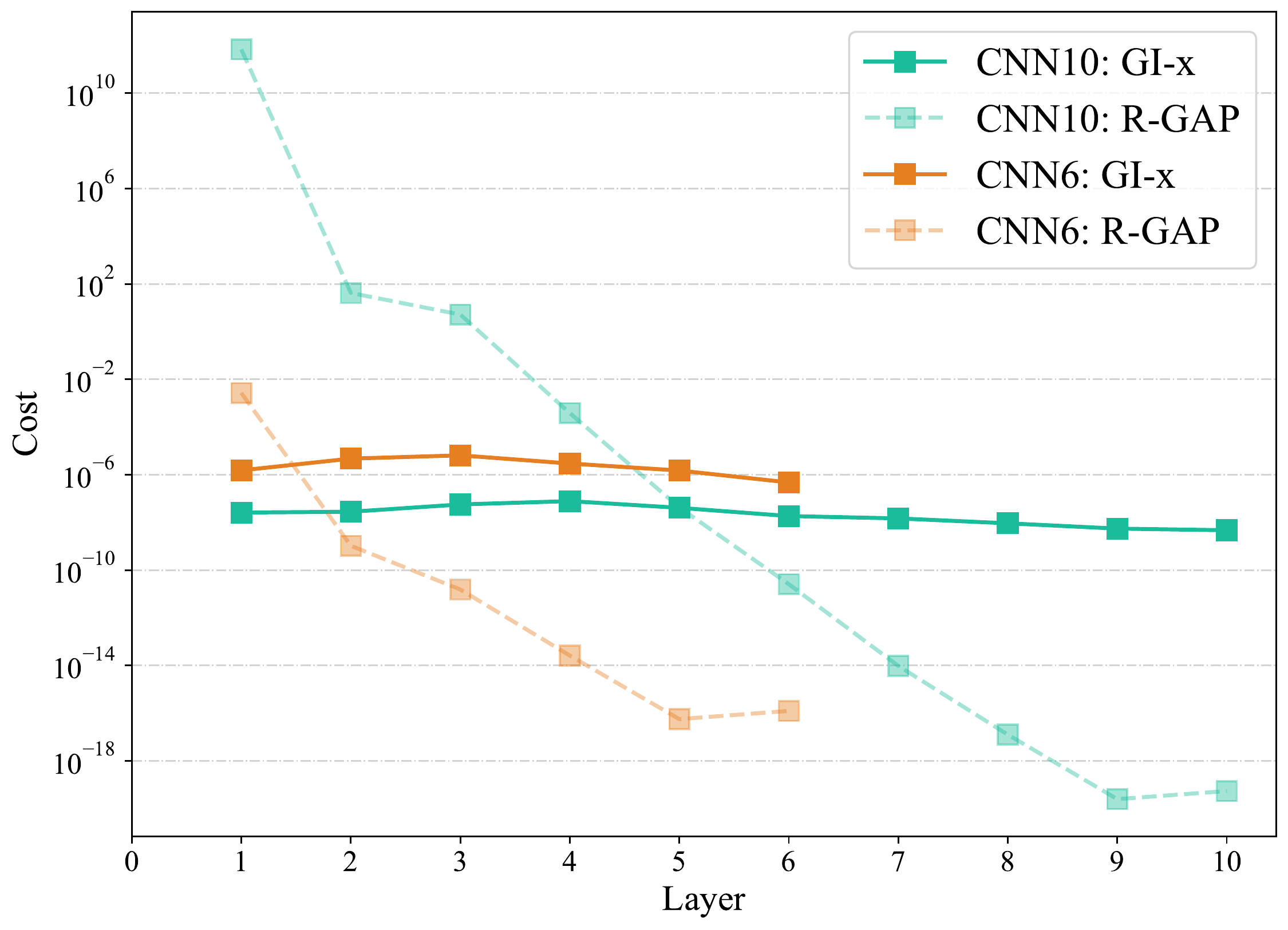}
      \caption{
          Layer-wise errors of convolution layers in gradient inversion attack results.
          layer-by-layer method shows cumulative exploding error.  
        CNN6 denotes a neural network consists of six convolution layers and one FC layer.
        CNN10 denotes a neural network consists of ten convolution layers and one FC layer.
      }
      \label{figure:layer-loss}
    \vskip -0.2in
    \end{figure}

      \section{Another method for gradient inversion: R-GAP~\cite{rgap}}
        \label{sec:rgap}

In the main text, to solve the inversion problem in \eqref{eq:GIP}, 
we use gradient descent method directly to the cost function, while
we alternate the searching space. Meanwhile, \citet{rgap}
propose another approach, called R-GAP (recursive gradient attack on privacy), to solve the optimization in \eqref{eq:GIP}, although
it is limited to the case when 
the learning model $f_\theta$ is given as a standard form described in Lemma~\ref{lem:conti-prob}.
R-GAP decomposes the optimization \eqref{eq:GIP} into a sequence of
linear programming 
to reconstruct the output of each layer except the last layer's, and then it solves them recursively from the penultimate layer to the input layer. The linear programming to find
the output $x_r$ of the $r$-th layer can be written as follows:
\begin{align} \label{eq:rgap-lin}
A_r x_r = b_r
\end{align}
where $A_r$ and $b_r$ is a matrix and vector
depending on the previously reconstructed $x_{r+1}$, 
the parameter $\Theta_r$ of the $r$-the layer and its gradients.
For the definition of $A_r$ and $b_r$, we refer to \cite{rgap}.
Since each linear programming has a closed-form solution $A_r^\dagger b_r$,
this approach can be sometimes useful in terms of reducing computational cost. 

\paragraph{R-GAP with generative model.}
Note that the problem in \eqref{eq:rgap-lin}
can be rewritten as follows:
\begin{align} \label{eq:rgap-opt}
\min_{x_r} \|A_r x_r - b_r\| \;.
\end{align}
Let $f_{\theta, r}(x)$ be the output of the $r$-th layer.
Then, we can interpret $f_{\theta, r} (G_w(z))$ as a generative model for $x_r$.
Hence, the recursive reconstruction can be partially or fully
replaced with the following optimization:
\begin{align} \label{eq:rgap-opt-gen}
\min_{z, w} \|A_r f_{\theta, r}(G_w(z)) - b_r\| 
\end{align}
where the search space can be alternated arbitrarily.

\paragraph{A limitation of R-GAP.}

Such a use of generative model in \eqref{eq:rgap-opt-gen}
provides the same gain from reducing searching space. We however want to note that
it inherits the limitation of R-GAP, in which the reconstruction error in upper layers
propagates to that in lower layer. Hence, as the learning model becomes deeper, 
the reconstruction quality decreases while the number of parameters is increasing.
Figure~\ref{figure:layer-loss} shows the phenomenon of error accumulation of R-GAP.
It is possible that the optimization method in \eqref{eq:rgap-opt-gen}
can have lager error than the closed-form solution $A_r^\dagger b_r$
due to imperfection in generative model. Therefore, 
it is better to not use the generative model when the original linear programming is over-determined or determined. Indeed, in Figure~\ref{figure:rgap}, 
we present a trade-off between the linear programming in \eqref{eq:rgap-opt-gen}
and the optimization with generative model in \eqref{eq:rgap-opt-gen}, in which to emphasize
the trade-off, we perform the latent space search over $z$ only.
We obtain a substantial gain
from using the generative model for a few layers (one or two), whereas the gradient inversion is failed when using the generative model for every layer.


    \begin{figure*}[ht]
    \vskip 0.2in
      \centering
      
      \begin{subfigure}[b]{0.49\textwidth}
      \includegraphics[width=\textwidth]{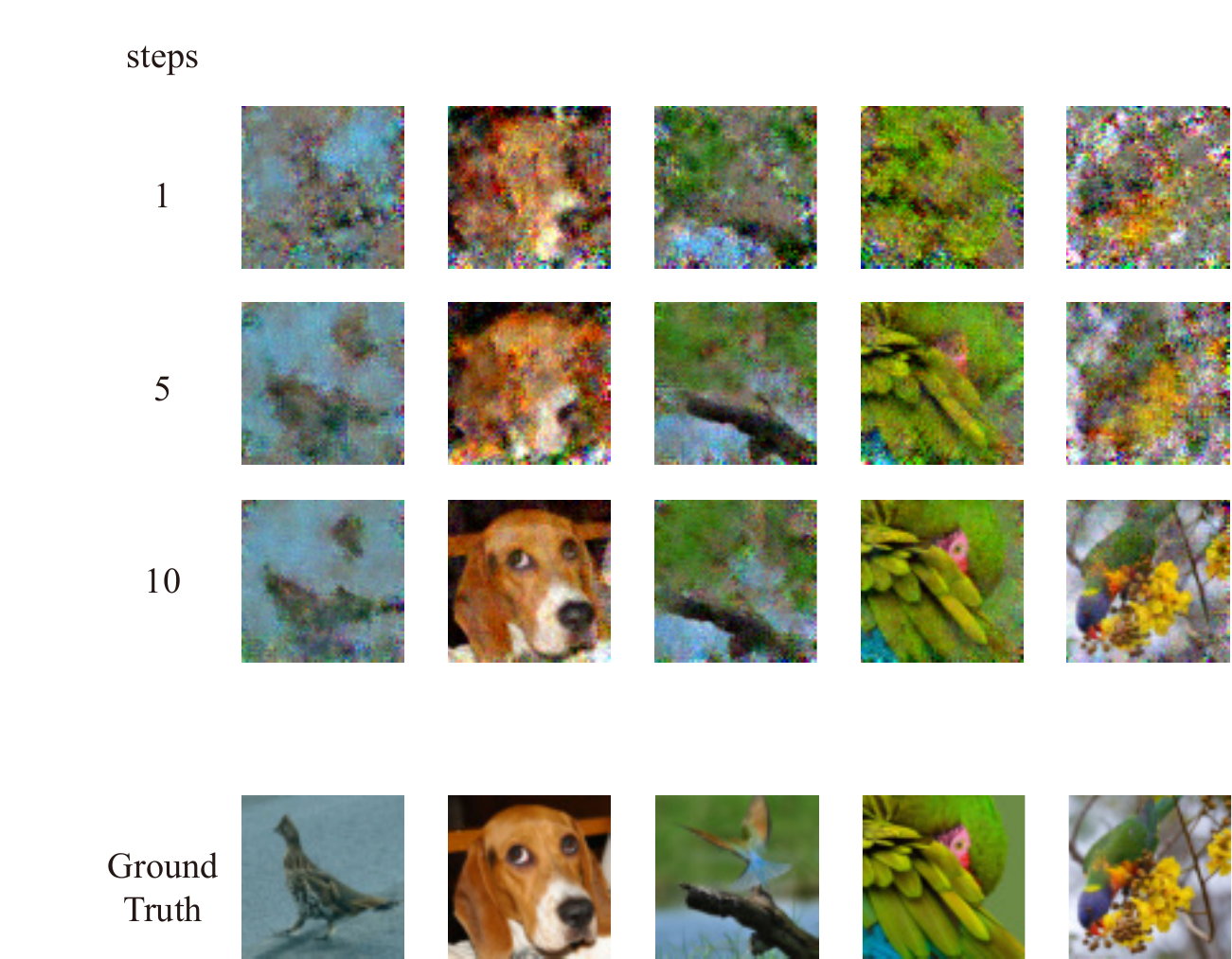}
      \caption{}
      \end{subfigure}
      \begin{subfigure}[b]{0.49\textwidth}
      \includegraphics[width=\textwidth]{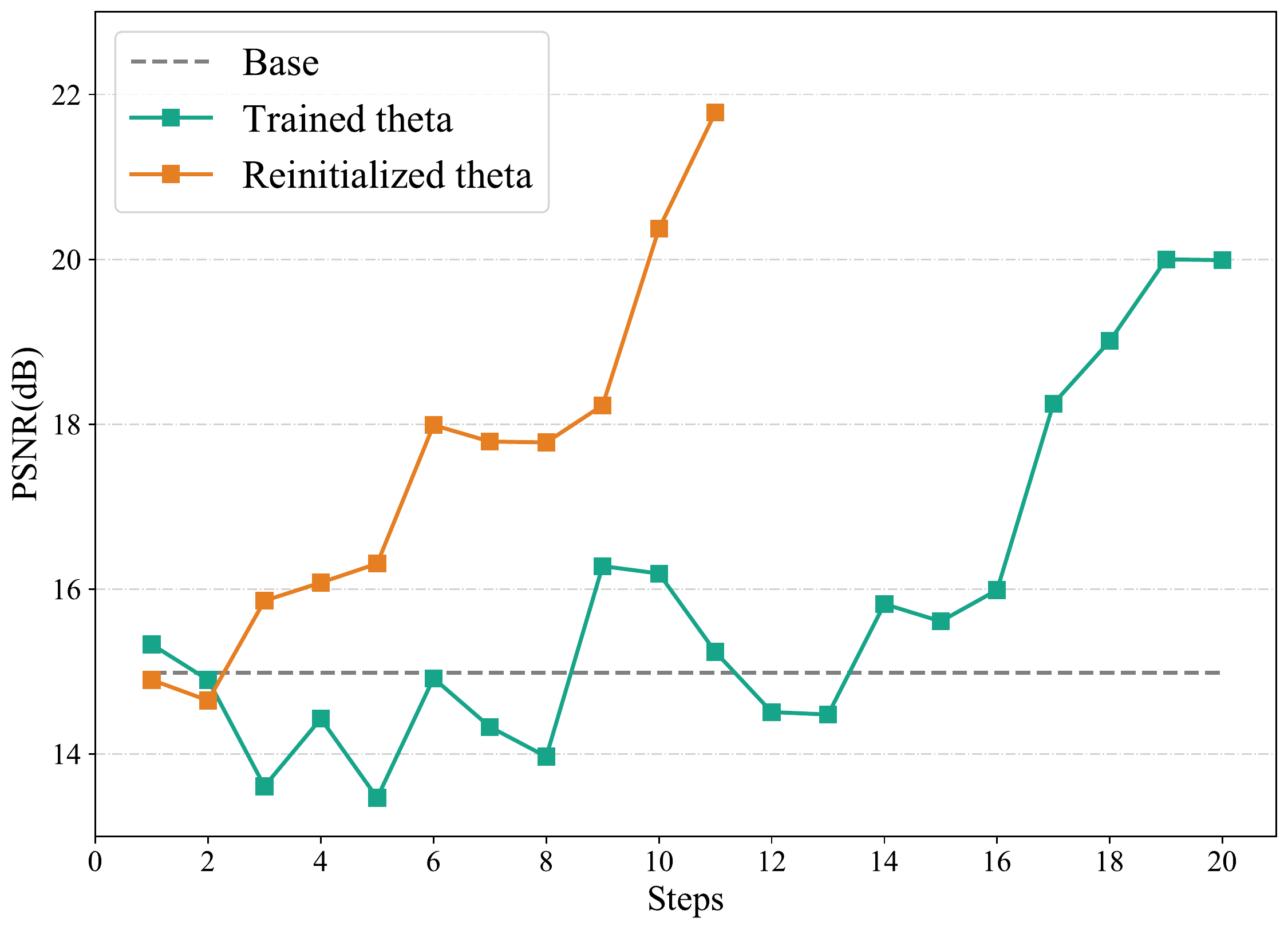}
      \caption{}
      \end{subfigure}
      \caption{
          (a)
          Examples of reconstructed images with a sequence of gradients.
          A local dataset contains eight images and generates gradients using four randomly chosen images in every step.
          As time $t$ increases, reconstructed images become more accurate.
            (b)
            PSNR of reconstructed images. 
            In reinitialized theta setting, the classification model is reinitialized every step.
            In trained theta setting, the classification model is trained every step.
      }
      \label{figure:seq_examples}
    \vskip -0.2in
    \end{figure*}

\section{Another potential gain from inverting a set of gradients}
\label{sec:sequence-of-gradients}
    In the main text, we demonstrate that from multiple gradients,
    we can train a generative model and use it to break the fundamental limit
    of inverting gradient solely.
    Beside this, assuming that we can observe a large number of gradients for the same data but different model parameters, it is able to reconstruct data almost perfectly
    by solving $\min_x \sum_{t = 1}^T c(x;\theta_t, g_t)$. 
    Such an assumption may be valid once we obtain the meta information
    to match gradients and data to be reconstructed.
    In Figure~\ref{figure:seq_examples}, we demonstrate this potential gain when 
    there are eight images only, but we observe a sequence of gradients obtained from the procedure of FL (the green curve). Of course, in the procedure of FL, the model parameter $\theta$
    slowly changes and thus the gain is smaller than that when each gradient is computed at
    completely random model parameters. However, in both settings, the reconstruction
    eventually becomes perfect as the observed gradients are accumulated.

\section{Strong generative prior}
\label{sec:ffhq_prior}
\label{sec:condition}

    When we have stronger prior on the data distribution, the gain from the generative model
    becomes larger. To show this, we use FFHQ \cite{Karras_2019_CVPR}
    rather than ImageNet in Section~\ref{sec:ablation}, where we believe FFHQ containing human-face images has less diversity than ImageNet including images of one thousand classes.
    With FFHQ data, even GI-$z$ significantly outperforms GI-$x$, 
    while the gap between GI-$z$ and GI-$x$ is small for ImageNet in Figure~\ref{figure:ablation_image}.
    This suggests a new approach to use a conditional generative model and data label $y^*$ in order for enjoying the gain from narrowing down the set of candidate input data by conditioning the label.



      

\section{Possible defense methods against gradient inversion attacks}
\label{sec:defense}
In this section, we briefly discuss several defense algorithms against gradient inversion attacks.

\paragraph{Disguising label information.}
    As a defense mechanism specialized for the gradient inversion with generative model,
    we suggest to focus on mechanisms confusing the label reconstruction.
    In Section~\ref{sec:ablation}~and~\ref{sec:condition}, we observe that
    revealing the data label can curtail the candidate set of input data 
    and thus provide a significant gain in gradient inversion by using conditional generative model.
    Therefore, by making the label restoration challenging, 
    the gain from generative model may be decreasing. To be specific,
    we can consider letting node sample mini-batch to contain data having a certain number of 
    labels, less than the number of data but not too small. By doing this, the possible combinations of labels per data in a batch increases and thus the labels are hard to recovered.
    
\paragraph{Using large mini-batch.}
    There have been proposed several defense methods against gradient inversion attacks~\cite{fedavg, lin2018dgc, wei}, 
    which let the gradient contain only small amount of information per data.
    Once a gradient is computed from a large batch of data, the quality of the reconstructed data
    using gradient inversion attacks fall off significantly, including ours(GI-$z/w$) as shown in Figure~\ref{figure:compressed}.
    The performance (PSNR) of GI-$z/w$, GI-$x$, and GI-$x$+BN are degenerated as batch size grows.
    However, we found that the degree of degeneration in GI-$x$+BN is particularly greater than that in ours, 
    and from batch size 32, the advantage of utilizing BN statistics almost disappeared. 
    This is because the benefit of BN statistics is divided by the batch size 
    while the generative model helps each reconstruction in batch individually.
    
\paragraph{Adding gaussian noise to the gradient.}
    From the perspective of the differential privacy, 
    adding noise to the gradient can prevent gradient inversion attacks 
    from optimizing its objective.
    In our experiments, adding sufficiently large gaussian noise were able to prevent gradient inversion algorithms, including ours.
    \cite{zhu} also provided a similar observation. 
    Furthermore, we investigated that using large mini-batch with adding noise leverages the degree of degeneration.
    The result is shown in Figure~\ref{figure:compressed}.
    This also implies that one needs to add a larger noise when using a smaller batch size. 
    In addition, this justifies employing a mechanism of secure multi-party computation 
    with zero-sum antiparticles \cite{bonawitz2017secureaggregation} or zero-mean noises \cite{adfss} against our attack method. 
    However, such a mechanism may increase implementation complexity or learning instability. 

    Such approaches can easily make the model training unstable.
    In general, we need to find a good balance in the trade-off between the stability of FL
    and the privacy leakage, while each defense mechanism has distinguishing pros and cons.

    
    
    
\section{Convergence speed comparison with GIML}
\label{sec:convergence_speed}
    
    \begin{figure}[ht]
      \centering
      
        \includegraphics[width=0.5\textwidth, trim={0cm, 0cm, 0cm, 0cm}]{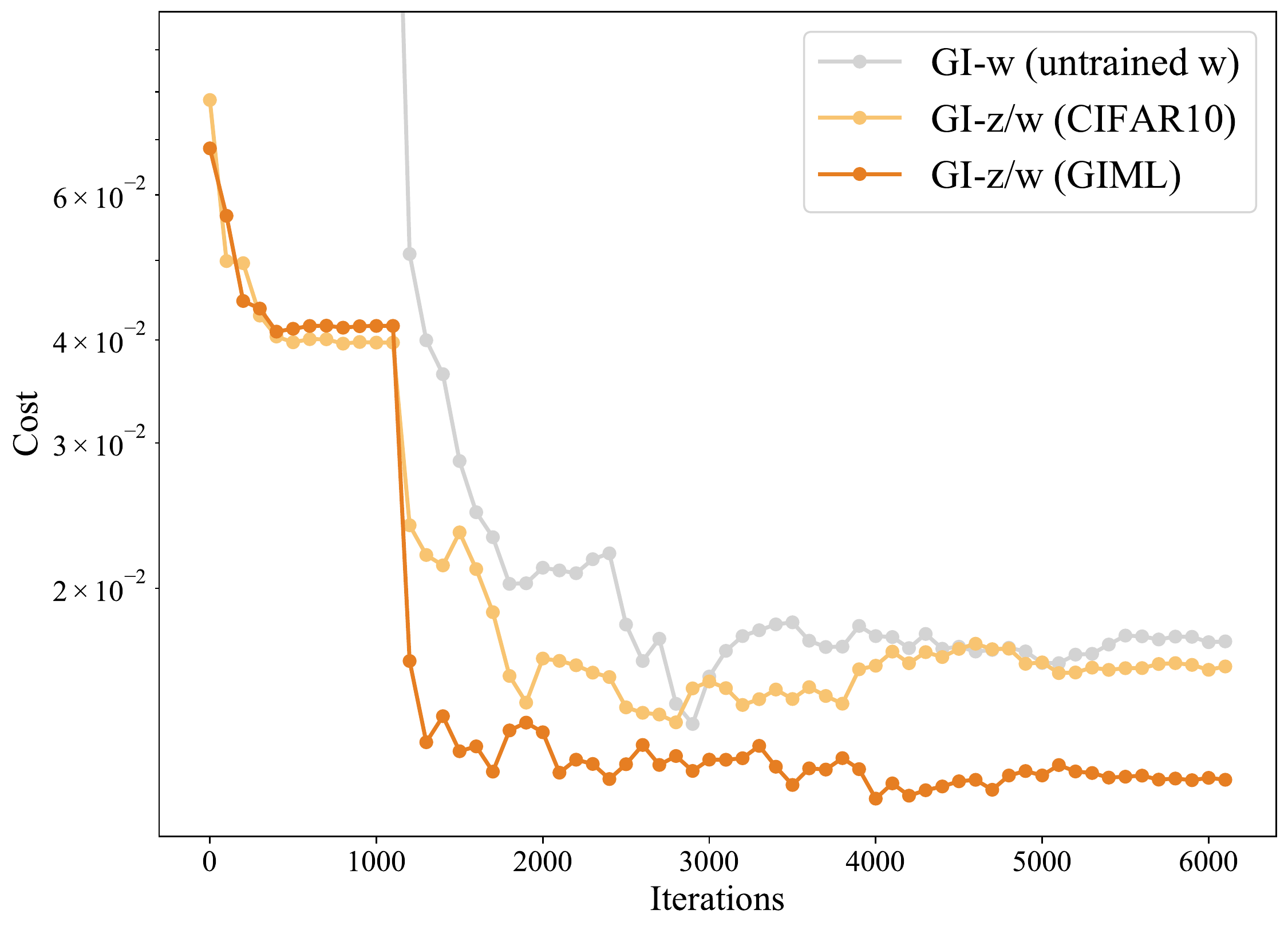}
        
      \caption{
        Gain of using generative model trained with GIML.
          A typical loss curve of reconstruction process.
          Meta-learned model converges faster than other model's results.
          Note that GI-$w$ does not perform latent space search of 0 to 1000 iterations.
      }
      \label{figure:loss_trained}
    \end{figure}
    
    Meta learning algorithms such as MAML~\cite{finn17maml} and Reptile~\cite{nichol2018firstorder} 
    are often regarded as finding a good initialization for multiple tasks.
    In our case, each GIAS corresponds to a task.
    Thus, not only the performance of GIAS, the convergence speed of GIAS also increases.
    In Figure~\ref{figure:loss_trained}, we compare convergence speeds of GIAS with 
    a meta-learned generative model, an wrong generative model(GIML), and an untrained generative model.
    The result shows using GIML also boost up the convergence speed of GIAS.
    

\section{Experiment settings}
\label{sec:exp-setting}

    Unless stated otherwise, 
    we consider an image classification task 
    on the validation set of ImageNet~\cite{imagenet} dataset 
    resized to $64 \times 64$
    using a randomly initialized ResNet18~\cite{he2016deep} as learning model. 
    The resizing is necessary for computational tractability. 
    Recalling the under-determined issue
    is the major challenge in gradient inversion, 
    deeper and wider $f_\theta$ makes the gradient inversion easier \cite{geiping}.
    Hence, the choice of ResNet18 as learning model is the most difficult setting within ResNet architectures since it contains the least information. 
    Considering a trained ResNet as learning model results in slight drop of quantitative performance and large variance.
    We use a StyleGAN2\cite{Karras2019stylegan2} model trained on ImageNet for GIAS,
    in which the latent space search over $z$ implies the search over the intermediate latent space, known as $\mathcal{W}$ in the original paper \cite{Karras_2019_CVPR},
    to improve the reconstruction performance, c.f., \cite{Abdal_2020_CVPR}.
    %
    We use the batch size $B = 4$ as default,
    and negative cosine for the choice of gradient dissimilarity function $d(\cdot,\cdot)$,
    which apparently provides better inversion
    performance than $\ell_2$-distance in general \cite{geiping}. 
    For the optimization in GIAS, we use 
    Adam optimizer \cite{adam}
    which decays learning rate by a factor of $0.1$ at
    $3/8, 5/8, 7/8$ of total iterations.
    from initial learning rates $\eta_z = 3\times 10^{-2}$ for the latent space search and $\eta_w = 10^{-3}$ for the parameter space search. 
    Since our experiments are conducted with image data, we used total variation regularizer with
    weight $\lambda_\text{TV} = 10^{-4}$ for all experiments.
    For each inversion, we pick the best recovery
    among $4$ random instances based on the final loss.
    All experiments are performed on GPU servers equipped with NVIDIA RTX 3090 GPU and NVIDIA RTX 2080 Ti GPU.
    Numerical results including graphs and table are averaged over 10 samples except Figure~\ref{figure:images} and Figure~\ref{figure:loss_trained}.

    Note that our baseline implementation for \cite{nvidia} includes fidelity regularizer with BN$_\text{exact}$ and group lazy regularizer, not group registration regularizer. 
    Our baseline implementation might be imperfect, but it still demonstrates adding our method improves performance.

\section{License of assets}
    \paragraph{Dataset.}
        ImageNet data are distributed under licenses which allow free use for non-commercial research.
        FFHQ data are distributed under licenses which allow free use, redistribution, and adaptation for non-commercial purposes.
        ffhq-features-dataset provides annotations of FFHQ images. 
        Original authors of FFHQ images are indicated in the metadata, if required.
        We did not include, redistribute, or change the data itself, and cited above three works.
        Note that there are still some concerns related to whether the data owners of the original images or the people within the images provided informed consent for research use.
    
    \paragraph{Source code.}
        Some parts of our source code came from open-source codes of several previous researches.
        For more details, see README of our source code.

\end{document}